\begin{document}

\title{Simple and Fast Algorithms for Interactive Machine Learning with Random Counter-examples}

\author{Jagdeep Bhatia \\
200430@whrhs-stu.org \\
Watchung Hills Regional High School\\
Warren, NJ 07059, USA \\}

\maketitle

\begin{abstract}%  <- trailing '%' for backward compatibility of .sty file
This work describes simple and efficient algorithms for interactively learning non-binary concepts in the learning from random counter-examples (LRC) model. Here, learning takes place from random counter-examples that the learner receives in response to their \textit{proper equivalence queries}. In this context, the learning time is defined as the number of counter-examples needed by the learner to identify the target concept. Such learning is particularly suited for online ranking, classification, clustering, etc., where machine learning models must be used before they are fully trained. 

We provide two simple LRC algorithms, deterministic and randomized, for exactly learning non-binary target concepts for any concept class $H$. We show that both of these algorithms have an $\mathcal{O}(\log{}|H|)$ asymptotically optimal average learning time. This solves an open problem on the existence of an efficient LRC randomized algorithm while simplifying and generalizing previous results. We also show that the expected learning time of any arbitrary LRC algorithm can be upper bounded by $\mathcal{O}(\frac{1}{\epsilon}\log{\frac{|H|}{\delta}})$, where $\epsilon$ and $\delta$ are the allowed learning error and failure probability respectively. This shows that LRC interactive learning is at least as efficient as non-interactive Probably Approximately Correct (PAC) learning. Our simulations show that in practice, these algorithms outperform their theoretical bounds.

%This work describes simple and efficient algorithms for interactively learning non-binary classifiers in the recently proposed setting of learning from random counter-examples (LRC). Recent research has shown that interactive learning in the LRC model is possible in $\mathcal{O}(\log{}|H|)$ optimal average learning time for any concept class $H$, however, the Max-Min algorithm involved is complex. Moreover, an open question remains as to whether there exists an efficient randomized interactive learning algorithm. In addition, the learning rate of an arbitrary interactive learner remains unknown. This research addresses all these challenges by designing novel LRC algorithms and analyzing their performance using mathematical tools from probability, statistics, and computational learning theory. First, it shows a simple LRC algorithm based on majority vote that not only learns in $\mathcal{O}(\log{}|H|)$ optimal average time, but also generalizes to non-binary concepts. Second, it solves the open problem with a randomized LRC algorithm and establishes that its performance is also optimal. Finally, it proves that the expected learning rate of any arbitrary LRC algorithm can be upper bounded by $\mathcal{O}(\frac{1}{\epsilon}\log{\frac{|H|}{\delta}})$, where $\epsilon$ and $\delta$ are the allowed learning error and failure probability respectively. This research therefore establishes that interactive learning in the LRC model is at least as efficient as non-interactive Probably Approximately Correct (PAC) learning. 
\end{abstract}

\begin{keywords}
 interactive learning, active learning, online learning, PAC learning, algorithms, learning theory
\end{keywords}

\section{Introduction}

Machine learning has made great advances in fields such as image recognition and natural language processing. However, it currently requires large sets of training data upfront. This is a problem because often the amount of data available is small or sometimes the machine learning models must be used before they are fully trained. In these cases, interactive learning, which involves training machine learning models through interaction between a learner and a teacher can be very beneficial. For example, in personalized learning with one-on-one interaction, teachers are better able to assess their students' strength and weakness and provide individualized instruction. Companies such as Netflix and Amazon also use these kinds of interactive algorithms to learn the preferences of their customers and provide them with engaging content.

%ME: Replaced optimal by asymptotically optimal everywhere
%ME: Changed learning rate to learning time everywhere
%ME: Noted that  algorithms have lower computational time requirements (search for computation)

There are numerous types of interactive learning frameworks, such as that of active learning~\citep{burr2012}. Our work, however, uses the recently proposed framework of exact learning through random counter-examples (LRC) 
presented by Angluin and Dohrn~\citep{angluin2017power}. In this environment, the learning process may be viewed as a game between a \textit{teacher} and a \textit{learner}, where the goal of the learner is to identify a \textit{target concept} chosen by the teacher from a set of hypotheses $H$ called a \textit{concept class}. The learner learns through \textit{proper equivalence queries}~\citep{angluin2017power}, which means that in every round of the learning process, the learner queries the teacher by selecting a hypothesis from the concept class. The teacher either indicates that the learner has identified the target concept, or randomly reveals one of the mistakes the learner's hypothesis makes, called a \textit{counter-example}. This process continues until the learner correctly identifies the target concept, and the learner's goal is to minimize the number of rounds needed to do this.

While other interactive learning models assume adversarial teachers and do worst-case analysis~\citep{angluin1988queries, bf-pgrf-72, zadeh2017, littlestone1988learning}, the LRC framework proposes a more helpful  teacher for which average case analysis can be done. This is more realistic because in most practical applications, the teacher will not be trying to hinder the learning process. 
Additionally, learning in the LRC model is proper, meaning that the learner picks their hypotheses from their concept class. This is also more realistic because in applications such as personalized learning, learners with different concept classes can represent learners of different skill and prior knowledge levels. With improper learning, however, learners cannot be distinguished in this way. 

The main contribution of our work is the simplification, generalization, and solution of an open problem from previous research in 
%interactive learning and 
%ME: Commented
the LRC model~\citep{angluin2017power}. We also provide a mathematical proof that interactive LRC learning 
%in the LRC model 
%ME: Commented
is at least as efficient as non-interactive Probably Approximately Correct (PAC) learning~\citep{kearns1994introduction} regardless of the learning algorithm used by the interactive learner. In particular our results are as follows:
%ME: "+ our results are as follows:"
\begin{itemize}
\item {\bf Majority learning algorithm:} We provide a simple interactive learning algorithm based on majority vote which learns in the best possible $\mathcal{O}(\log{}|H|)$ expected number of rounds. This algorithm works by simply picking, in each round, a hypothesis in the concept class that has the highest "majority" score among the hypotheses that are consistent with, or do not contradict, the counter-examples seen so far. Thus, this algorithm can be seen as a generalization of the well known halving algorithm based on majority voting~\citep{angluin1988queries, bf-pgrf-72, littlestone1988learning} to the LRC setting. 
The majority  algorithm requires lower computation time  and is a significant simplification over the previously known Max-Min  algorithm~\citep{angluin2017power} for learning binary hypotheses, which was also shown to be asymptotically optimal in the LRC setting, but is much more complex.

\item {\bf Randomized learning algorithm:} We solve the open problem posed by Angluin and Dohrn~\citep{angluin2017power} that asks if there exists 
%an efficient 
%ME: commented
a randomized learning algorithm with the same asymptotically optimal  $\mathcal{O}(\log{}|H|)$ bound on expected number of rounds when the teacher draws the target concept from a known probability distribution over all hypothesis in $H$ at the beginning of the learning process. We show that such an algorithm does exist, and that the bound is achieved when the learner also draws consistent hypotheses from the same probability distribution conditioned on the previous sequence of learner's hypotheses and the teacher's counter-examples. This algorithm also requires lower computation time than the Max-Min algorithm.

\item {\bf Upper bound on the learning time of an arbitrary learning algorithm:} We prove that with probability greater than $1- \delta$ and with error less than $\epsilon$, the expected number of rounds for an  arbitrary LRC algorithm is upper bounded by $\mathcal{O}(\frac{1}{\epsilon}\log{\frac{|H|}{\delta}})$. This result holds for any arbitrary small $\epsilon$ and $\delta$ values. It also establishes that LRC learning, regardless of the learning algorithm, is at least as efficient as non-interactive Probably Approximately Correct (PAC) learning~\citep{kearns1994introduction}.

\item {\bf Generalization to non-binary hypotheses:} We extend the original work on LRC~\citep{angluin2017power} which is for binary concept classes $H$, more generally to concept classes with arbitrary values.

\item {\bf Performance evaluation:} We show with simulations that in practice, the Majority and Arbitrary learning algorithms outperform their worst case bounds.
\end{itemize}
\iffalse
The rest of the paper is structured as follows. Section~\ref{procedures} describes the learning models and the analysis methodology used in this paper. All the learning algorithms and the analysis of their performance bounds is presented in section~\ref{results}. In Section~\ref{simulations} we present our simulation results for these algorithms. Section~\ref{discussion} discusses the implication of these results. Comparison with related work is described in section~\ref{related}. Section~\ref{conclusion} presents conclusion and future work, which are followed by acknowledgments.
\fi

%%%%%%%%%%%%%%%%%%%%%%%%%%%%%%%%%%%%%%%%%%%%%%%%%%%%%%%%%%%%%%%%%%%%%%%%%%%
%               RELATED WORK
%%%%%%%%%%%%%%%%%%%%%%%%%%%%%%%%%%%%%%%%%%%%%%%%%%%%%%%%%%%%%%%%%%%%%%%%%%%

\section{Related Work}
\label{related}
Interactive learning is typically characterized by learning that takes place through query and response.
One of the most common models of interactive learning is active learning~\citep{burr2012}, where the learner queries the teacher for the labels of sample points. Active learning is commonly used in settings where labeling costs are high and therefore it can be much more cost-effective to selectively label the samples as opposed to labeling all the samples upfront.

Our work, however, pertains to the interactive learning setting of learning with random counter-examples~\citep{angluin2017power} using equivalence queries. In this setting, the learner is not allowed to query the teacher directly for sample points. Instead, the teacher randomly selects which sample points to label (i.e. gives random counter-examples) based on where the learner's hypothesis is wrong. Interactive learning that deals with more general membership, equivalence, and related queries, was initially proposed in the seminal work of Angluin~\citep{angluin1988queries}.

In a related setting, Littlestone ~\citep{littlestone1988learning} developed efficient learning algorithms for minimizing the number of mistakes when learning certain boolean functions including k-DNF. 
Additionally, there has been significant work on interactive learning with equivalence queries that deals with specific geometric classes such as hyperplanes or axis-aligned boxes~\citep{massturan94}. The work of Maass and Tur{\'a}n~\citep{massturan90, massturan92} is on the complexity of interactive learning including lower bounds on the number of membership and equivalence queries required for exact learning. Many complexity results can be found in the work of Angluin~\citep{angluin2004}.

Recent work in interactive learning with equivalence queries pertains to clustering, ranking, and classification~\citep{zadeh2017}. In this line of work~\citep{joachims2002, zadeh2017}, one of the goals is to quickly learn the preferred ranking of a list of items in an online setting. This is motivated by applications in personalized web search and information retrieval systems where the learning algorithms learn their users' preferences through online feedback in the form of click behavior. In another line of work which is on  interactive learning for clustering, the goal is to learn a user's preferred clustering of a set of objects in an online fashion~\citep{awasthi2017, balcan2008, zadeh2017}.
An application of this work includes identifying communities in social networks.

Learning in the LRC framework~\citep{angluin2017power}, on which our work is based, is proper since the learner's equivalence queries are required to be from their concept class $H$.
% Our work is most closely related to the work on learning binary hypotheses in the framework of LRC~\citep{angluin2017power}. 
This makes LRC different from other recent work~\citep{zadeh2017} as in that setting the learner's hypotheses are not necessarily drawn from the concept class $H$. This is also what makes our work different from the previous work on halving algorithms ~\citep{littlestone1988learning, bf-pgrf-72, angluin1988queries} based on majority vote. The LRC model~\citep{angluin2017power} that we use is also unique in that the teacher's counter-examples are not given arbitrarily, but are randomly drawn from a fixed probability distribution conditioned on the set of all possible counter-examples.

\section{Learning Models}
\label{learningmodels}
\linespread{1.5}
The primary learning model used in this work is that of learning through random counter-examples (LRC)~\citep{angluin2017power}. As LRC is only defined for exact learning, we propose a new learning model that we call \textit{PAC-LRC} which models approximate learning in the LRC setting.
%ME: changed black box to PAC-LRC model everywhere

%One of the learning models used in this work is that of exact learning through random counter-examples (LRC)~\citep{angluin2017power}.
In LRC, the learner's concept class can be viewed as a $n \times m$ matrix $H$, whose rows (denoted by $h$) represent the set of learners \textit{hypotheses} and whose columns (denoted by $X$) represent the set of examples (samples). The \textit{target concept} is one of the rows of the matrix, $h^* \in H$. Additionally, the entries of matrix $H$ represent the values assigned by each hypothesis $h \in H$ to each example $x \in X$ (denoted by function $h(x)$). In this framework, the learner learns through \textit{proper equivalence queries}~\citep{angluin2017power}. This means that in every round of the learning process, the learner queries the teacher by selecting a hypothesis $h \in H$. The teacher either indicates that $h$ is the target concept (and hence the learning is complete), or reveals a \textit{counter-example} $x$ and the value of $h^*(x)$ for $x \in X$ on which the target concept differs with $h$, i.e. $h(x) \neq h^*(x)$. Moreover, there is a known probability distribution $\mathbb{P}$ over $X$ and the teacher's counter-example $x$ is drawn from the probability distribution $\mathbb{P}(h,h^*)$ (denoted by $x \sim \mathbb{P}(h,h^*)$) which is defined as $\mathbb{P}$ conditioned on the event $h(x) \neq h^*(x)$. Upon receiving the counter-example, the learner selects another hypothesis $h \in H$ for the next round and this continues until $h = h^*$. The learner's goal is to minimize the number of rounds needed for learning the target concept $h^*$. In this work, the LRC learning model is used for both the Majority algorithm and the Randomized learning algorithms. However, for the Randomized learning algorithm it is additionally assumed that the \textit{target concept} is randomly drawn by the teacher at the beginning of the learning process using a known probability distribution. 

%Another learning model used in this work is for approximate learning with random counter-examples. 
We also define a model \textit{PAC-LRC}, an extension of  LRC, for approximate learning with random counter-examples. For this we introduce two additional parameters:
%In this model, referred as the \textit{PAC-LRC model}, 
%two additional parameters are given: 
$\epsilon$, the allowed error, and $\delta$, the allowed failure probability.  In PAC-LRC the goal is to approximately learn the target concept  with probability at least $1-\delta$ and with error no more than $\epsilon$. For this we define a $\epsilon$-\textit{bad} hypothesis to be a hypothesis that differs with the target concept in a region that has total probability at least $\epsilon$. As in LRC, learning in PAC-LRC proceeds in rounds. In each round in which the learner presents a  $\epsilon$-\textit{bad} hypothesis a  randomly drawn counter-example is returned to the learner and the learning continues. Otherwise, the learner's hypothesis is accepted and the learning ends. Learning in PAC-LRC   may also end if the probability of elimination of all $\epsilon$-\textit{bad} hypotheses in concept class $H$ exceeds $1-\delta$.
%ME Many changes in this part
%learner presents a hypothesis that differs with the target concept in a region that has total probability at most $\epsilon$, in other words, a hypothesis that is not $\epsilon$-\textit{bad}.
%In particular, in each round of this model, the learner presents their hypothesis to the teacher 
%as a \textit{PAC-LRC}. The teacher queries the PAC-LRC on random examples to estimate the size (total probability) of the difference region between the learner's hypothesis and the target concept. 
%The learning process ends when the learner presents a hypothesis that differs with the target concept in a region that has total probability at most $\epsilon$, in other words, a hypothesis that is not $\epsilon$-\textit{bad}. Otherwise, just as in exact learning, a randomly drawn counter-example is returned to the learner and the process continues.
%The teacher queries the black box by randomly selecting up-to $\frac{1}{\epsilon}$ examples until a counter-example is drawn. This counter-example if found is returned to the learner and the process continues. 
%For the sake of the analysis, the learning process ends after 
%no consistent $\epsilon$-\textit{bad} hypotheses remains in the concept class with probability at least $1-\delta$. 
The PAC-LRC model is inspired by the well known probably approximately correct or PAC model~\citep{kearns1994introduction}, where generally $\delta, \epsilon \ll \frac12$. However, unlike the LRC and PAC-LRC model, in the PAC model the probability distribution on examples is unknown to the learner. 
%The PAC-LRC model can be seen as a combination of the LRC and PAC models.

\iffalse
\subsection{Analysis Methodology}
In this work, bounds on the performance of the proposed algorithms are mathematically established. Expressions are derived for their learning rates, or their required number of rounds, as a function of the input size ($|H|$). Following standard complexity measure, the focus is on asymptotic functions which were accurate to within constant factors (using big O notation). 
%The learning rate of the algorithms presented in this work is therefore expressed using the big O notation. 

The derived functions for the learning rates are required to hold for all possible probability distributions on the examples for the majority and the Randomized learning algorithms, and for all possible teacher's probability distributions for the Randomized learning algorithm. Specifically, for both these algorithms the goal is to derive upper bounds on the "expected" number of rounds required for the learner to "exactly" identify the target concept as only a function of $|H|$, the size of the learner's concept class $H$. For the adversarial learning algorithm, the learning rate has to be expressed as a function of two additional input parameters $\epsilon$ and $\delta$. It has to be an upper bound on the "expected" number of learning rounds required for eliminating all the $\epsilon$-\textit{bad} hypotheses from the learner's concept class $H$ with probability at least $1-\delta$.
% and for all possible learners hypotheses choices. 
\fi

%%%%%%%%%%%%%%%%%%%%%%%%%%%%%%%%%%%%%%%%%%%%%%%%%%%%%%%%%%%%%%%%%%%%%%%%%%%
%               RESULTS
%%%%%%%%%%%%%%%%%%%%%%%%%%%%%%%%%%%%%%%%%%%%%%%%%%%%%%%%%%%%%%%%%%%%%%%%%%%

\section{Algorithmic Results}
\label{results}
In section~\ref{ssprelim}, some definitions used throughout the paper are explained. In section~\ref{ssmla}, the Majority learning algorithm is presented and shown to be asymptotically optimal in the LRC model. In section~\ref{ssrla}, the Randomized learning algorithm is presented and is also shown to be asymptotically optimal in the LRC model. In section~\ref{ssala}, an upper bound is derived for the learning time of an Arbitrary learning algorithm in the PAC-LRC setting.

%%%%%%%%%%%%%%%%%%%%%%%%%%%%%%%%%%%%%%%%%%%%%%%%%%%%%%%%%%%%%%%%%%%%%%%%%%%
%               Prelims
%%%%%%%%%%%%%%%%%%%%%%%%%%%%%%%%%%%%%%%%%%%%%%%%%%%%%%%%%%%%%%%%%%%%%%%%%%%

\subsection{Preliminaries}
\label{ssprelim}
\begin{definition}
\label{hdef}
$H$ the learner's concept class is a $n\times m$ matrix, for positive integers $n$ and $m$, with no duplicated rows or columns. $H$ can be thought of as a set of hypotheses. $|H|$ denotes the number of hypotheses in $H$.
\end{definition}

\begin{definition}
\label{xdef}
$X$ denotes the set of columns of matrix $H$.
\end{definition}

\begin{definition}
\label{lildef}
$h \in H$ denotes a hypothesis or row in matrix $H$.
For $x \in X$ the function $h(x) \in \mathbb Z_{\ge 0}$ denotes the value of row $h$ at column $x$ in $H$. 
Here $\mathbb Z_{\ge 0}$ refers to the set of non-negative integers.
%There are no restrictions on the values of elements in matrix $H$.
%ME: Introduced $ \mathbb Z_{\ge 0}$. Saved a line
\end{definition}

\begin{definition}
\label{pdef1}
$\mathbb{P}$ denotes a probability distribution over $X$. For $x \in X$, $\mathbb{P}(x)>0$ denotes the probability that $x$ is drawn from $\mathbb{P}$, or $x \sim \mathbb{P}$, and
$\mathbb{P}(S) = \sum_{x \in S}{\mathbb{P}(x)}$. 
%Here  $\mathbb{P}(x) > 0$ for all $x \in X$.
%ME: Not putting equation on its own line
%ME: Moved >0 up.
\end{definition}

\begin{definition}
\label{ddef}
Define $D(h_1, h_2)$ to be the set of all columns on which $h_1 \in H$ and $h_2 \in H$ have different values. More formally, $D(h_1, h_2) = \{x\in X \mid h_1(x) \neq h_2(x) \}$.
%ME: Not putting equation on its own line
\end{definition}

\begin{definition}
\label{pdef}
$\mathbb{P}(h_1,h_2)$ is defined as a probability distribution over $D(h_1, h_2)$, and is the result of conditioning $\mathbb{P}$ on the event $h_1(x) \neq h_2(x)$.
%$\mathbb{P}(h_1,h_2, x)$ is defined as the individual
$\mathbb{P}[x \mid h_1(x) \neq h_2(x) ]$ is defined as the individual
%ME: Changed defn of P(h_1, h_2,x) 
probability of drawing element $x \in D(h_1, h_2)$ from distribution $\mathbb{P}(h_1,h_2)$.
\end{definition}

%%%%%%%%%%%%%%%%%%%%%%%%%%%%%%%%%%%%%%%%%%%%%%%%%%%%%%%%%%%%%%%%%%%%%%%%%%%
%               Majority Learning Algorithm
%%%%%%%%%%%%%%%%%%%%%%%%%%%%%%%%%%%%%%%%%%%%%%%%%%%%%%%%%%%%%%%%%%%%%%%%%%%

\subsection{Majority Learning Algorithm}
\label{ssmla}
In this section, the Majority learning algorithm is proposed and mathematically analyzed and its performance is shown to be asymptotically optimal in the LRC learning model.

\begin{definition}
\label{sec4majdef}	
\(\operatorname{MAJ}_H\) is a hypothesis constructed by setting the value in each of its columns to the most frequent element in the corresponding column of matrix $H$. Ties are broken in favor of the smaller element.
%ME: Added something for tie
%taking the most For concept class \(H\), hypothesis \(\operatorname{MAJ}_H\) is constructed by taking that each column of \(\operatorname{MAJ}_H\) contains the most frequent element in the corresponding column of matrix $H$. 
Note that it is possible that $\operatorname{MAJ}_H \notin H$.
\end{definition}

\begin{definition}
\label{sec4hhatdef}
The best majority hypothesis \(\hat h\) of $H$ is any hypothesis in \(H\) that maximizes
\(\mathbb{P}(h(x) = \operatorname{MAJ}_H(x))\).  Ties are broken in favor of the smallest $h$ in a lexicographic sort by the values of $h(x)$. We  consider $h_1 \in H$ lexicographically smaller than $h_2 \in H$, if there exists a column $i$ such that $h_1(x_i) < h_2(x_i)$ and $h_1(x_j) = h_2(x_j)$ for all columns $j < i$. 
\end{definition}

%We now present the majority learning algorithm, which can learn $h^*$ by seeing at most $\mathcal{O}(\log{}|H|)$ counter-examples:\\\\\\

\begin{algorithm}[h!]
\label{alg1}
 \While{true}{
 	Pick \(\hat h\) to be a best majority hypothesis in $H$. \\ 
%( maximizes \(\mathbb{P}(h(x) = \operatorname{MAJ}_H(x))\).\\
	Let $x$ be the counter-example returned by the teacher for \(\hat h\).\\
	\If{ there is no such counter-example}
	{Output $\hat h$. \\}
	\Else
	{Eliminate the set of hypotheses $\{h \in H \mid h(x) \neq h^*(x)\}$ from $H$.\\}
 }
 \caption{Majority Learning Algorithm}

\end{algorithm}

The analysis for the Majority learning algorithm (Algorithm~\ref{alg1}) is as follows. Lemma~\ref{sec4lem1} and Lemma~\ref{sec4lem2} establish the performance of the algorithm for any particular round. In these Lemmas, $H$ denotes the set of consistent hypotheses (that do not contradict the counter-examples seen so far) and $\hat h$ denotes the learner's choice of hypothesis for the round being considered. Lemma~\ref{sec4lem1} shows that the probability that the teacher's counter-example $x$ is a majority element in $\hat h$, or $\hat h(x) = \operatorname{MAJ}_H(x)$, is at least $\frac{1}{2}$. Lemma~\ref{sec4lem2} shows that a counter-example drawn by the teacher will eliminate at least $\frac{1}{4}$ fraction of the remaining hypotheses in expectation.  
Through an example we show that $\frac{1}{4}$ is the best possible per round fraction that can be guaranteed for the Majority algorithm in general.
%ME: New statement above on 1/4 being best fraction
Theorem~\ref{sec4t1} shows that the Majority learning algorithm has an $\mathcal{O}(\log{}|H|)$ expected learning time which is asymptotically the best possible in the LRC model~\citep{angluin2017power}. 
Finally, Theorem~\ref{fast} bounds its per round computation time.
%ME: Changed running time to learning time and mentioned Theorem~\{fast}

\begin{lemma}
\label{sec4lem1}
Let \(\hat h\) be the best majority hypothesis selected by the Majority algorithm. Let \(h^* \neq \hat h\) be any hypothesis in $H$. Let $A$ be the event  $\{ x : \hat h(x) \neq h^*(x) \wedge x \in X \}.$ Then, $\mathbb{P}(h^*(x) \neq \operatorname{MAJ}_H(x) \mid A) \ge \frac12$.
%ME: Rewrote lemma statement fix errors
%Fix any hypothesis \(h^* \in H\), and let \(\hat h\)
%be as defined above. Let $A$ be the event that $\hat h(x) \neq h^*(x).$ For $h^* \ne \hat h$ and $x \sim \mathbb{P}(\hat h, h^*)$, $\mathbb{P}(h^*(x) \neq \operatorname{MAJ}_H(x) \mid A) \ge \frac12$.
% \[\mathbb{P}(h^*(x) \neq \operatorname{MAJ}_H(x) \mid A) > \frac12.\]
%ME: Moved equation
%ME: Fixed > to \ge
\end{lemma}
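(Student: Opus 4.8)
The plan is to translate the conditional-probability statement into a comparison of two disjoint pieces of the event $A$, and then exploit the defining optimality of $\hat h$ to control their relative sizes. First I would observe that $A$ is exactly the set $D(\hat h, h^*)$ of columns on which $\hat h$ and $h^*$ disagree. Writing $M_g = \{x \in X : g(x) = \operatorname{MAJ}_H(x)\}$ for the set of columns on which a hypothesis $g$ matches the column-wise majority, the claim $\mathbb{P}(h^*(x) \neq \operatorname{MAJ}_H(x) \mid A) \ge \tfrac12$ is equivalent to $\mathbb{P}(M_{h^*} \cap A) \le \tfrac12\,\mathbb{P}(A)$. Since every $x \in A$ has $\hat h(x) \neq h^*(x)$, at most one of $\hat h(x)$ and $h^*(x)$ can equal $\operatorname{MAJ}_H(x)$, so the sets $M_{\hat h} \cap A$ and $M_{h^*} \cap A$ are disjoint subsets of $A$.

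The heart of the argument is to show $\mathbb{P}(M_{\hat h} \cap A) \ge \mathbb{P}(M_{h^*} \cap A)$; combined with disjointness this yields $\mathbb{P}(A) \ge \mathbb{P}(M_{\hat h} \cap A) + \mathbb{P}(M_{h^*} \cap A) \ge 2\,\mathbb{P}(M_{h^*} \cap A)$, which is the desired bound. To establish this inequality I would invoke Definition~\ref{sec4hhatdef}, by which $\hat h$ maximizes $\mathbb{P}(M_g)$ over $g \in H$; in particular $\mathbb{P}(M_{\hat h}) \ge \mathbb{P}(M_{h^*})$. The key observation that transfers this global inequality onto $A$ is that on the complement $A^c = \{x : \hat h(x) = h^*(x)\}$ the two hypotheses agree, so $M_{\hat h} \cap A^c = M_{h^*} \cap A^c$ exactly. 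Splitting each agreement set over $A$ and $A^c$ then gives $\mathbb{P}(M_{\hat h} \cap A) = \mathbb{P}(M_{\hat h}) - \mathbb{P}(M_{\hat h} \cap A^c) \ge \mathbb{P}(M_{h^*}) - \mathbb{P}(M_{h^*} \cap A^c) = \mathbb{P}(M_{h^*} \cap A)$.

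Putting these pieces together closes the proof. I expect the main obstacle to be precisely this transfer step: it is tempting but unjustified to argue directly that $\hat h$ beats $h^*$ on $A$ alone, since $\hat h$ is only guaranteed to be \emph{globally} optimal. The resolution is the cancellation on $A^c$, where $\hat h$ and $h^*$ are indistinguishable with respect to $\operatorname{MAJ}_H$, so the entire optimality margin is concentrated on $A$. A minor point worth checking is that the tie-breaking rules in Definitions~\ref{sec4majdef} and~\ref{sec4hhatdef} play no role here: the argument uses only the inequality $\mathbb{P}(M_{\hat h}) \ge \mathbb{P}(M_{h^*})$, which holds regardless of how ties are resolved.
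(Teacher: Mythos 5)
Your proof is correct and rests on exactly the same two ingredients as the paper's: on $A$ at most one of $\hat h,h^*$ can match $\operatorname{MAJ}_H$, and on $A^c$ the two hypotheses agree, so the global optimality of $\hat h$ from Definition~\ref{sec4hhatdef} transfers to $A$. The only difference is presentational — you argue directly via the disjointness bound $\mathbb{P}(A)\ge 2\,\mathbb{P}(M_{h^*}\cap A)$, whereas the paper runs the same comparison as a proof by contradiction.
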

\begin{proof}
Assume for sake of contradiction that the lemma is not true. Then:
%\setlength{\belowdisplayskip}{3.5pt} %\setlength{\belowdisplayshortskip}{3.5pt}
%\setlength{\abovedisplayskip}{3.5pt} %\setlength{\abovedisplayshortskip}{3.5pt}
%ME: This reduces spacing, but not needed
 \[\mathbb{P}(h^*(x) \neq \operatorname{MAJ}_H(x) \mid A) < \frac12 \mbox{ which implies }   \mathbb{P}(h^*(x) = \operatorname{MAJ}_H(x) \mid A) > \frac12.\]
 By definition of $A$ we have \(\hat h(x) \neq h^*(x)\) for all  $x \in A$. Thus given event $A$, for any $x$ for which $h^*(x) = \operatorname{MAJ}_H(x)$, we must have $\hat h(x) \neq \operatorname{MAJ}_H(x)$. Thus
$$ \mathbb{P}(\hat h(x) \neq \operatorname{MAJ}_H(x) \mid A) \ge \mathbb{P}(h^*(x) = \operatorname{MAJ}_H(x) \mid A) >  \frac12.$$
However,
\[\mathbb{P}(\hat h(x) \neq \operatorname{MAJ}_H(x) \mid A) > \frac12 \mbox{ implies }  \mathbb{P}(\hat h(x) = \operatorname{MAJ}_H(x) \mid A) < \frac12,\]
From $\mathbb{P}(h^*(x) = \operatorname{MAJ}_H(x) \mid A) > \frac12 \mbox{ and } \mathbb{P}(\hat h(x) = \operatorname{MAJ}_H(x) \mid A) < \frac12$, it follows that 
\[ 
 \mathbb{P}(h^*(x) = \operatorname{MAJ}_H(x) \mid A)
 > \mathbb{P}(\hat h(x) = \operatorname{MAJ}_H(x) \mid A),
\]
Since $\hat h(x) = h^*(x) $, for any $x \in X$  not in A we have
\[
 \mathbb{P}(h^*(x) = \operatorname{MAJ}_H(x))
 > \mathbb{P}(\hat h(x) = \operatorname{MAJ}_H(x)) .
\]
%ME: Added more explanation to make things clear
Since $h^* \in H$, it follows that 
$$\max_{h \in H} \mathbb{P}(h(x) = \operatorname{MAJ}_H(x)) >\mathbb{P}(\hat h(x) = \operatorname{MAJ}_H(x)).$$   This contradicts the definition of $\hat h$ (Definition \ref{sec4hhatdef}).
\end{proof}

\begin{lemma}
\label{sec4lem2}
Fix any hypothesis \(h^* \in H\), and let  \(\hat h\) be a best majority hypothesis selected by the Majority algorithm.  Let $A$ be the event  $\{ x : \hat h(x) \neq h^*(x) \wedge x \in X \}.$  For counter-example $x \sim \mathbb{P}(\hat h, h^*)$, the expected number of hypotheses
\(h \in H\) with \(h(x) \neq h^*(x)\) is at least \(|H|/4\). Thus, in expectation at least \(|H|/4\) hypotheses are eliminated by the counter-example.  
\end{lemma}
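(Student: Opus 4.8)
The plan is to compute the expected number of eliminated hypotheses directly and lower bound it using Lemma~\ref{sec4lem1}. First I would write the quantity of interest as an expectation: letting $N(x) = |\{h \in H : h(x) \neq h^*(x)\}|$ denote the number of hypotheses that a counter-example $x$ eliminates, the target is $\mathbb{E}_{x \sim \mathbb{P}(\hat h, h^*)}[N(x)]$, which by linearity of expectation equals $\sum_{h \in H} \mathbb{P}(h(x) \neq h^*(x) \mid A)$. Recall that $x \sim \mathbb{P}(\hat h, h^*)$ is precisely $\mathbb{P}$ conditioned on the event $A = \{x : \hat h(x) \neq h^*(x)\}$, so all probabilities below are implicitly conditioned on $A$.

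The key combinatorial step is a per-column lower bound on $N(x)$ that holds whenever $h^*(x) \neq \operatorname{MAJ}_H(x)$. Since $\operatorname{MAJ}_H(x)$ is by definition the most frequent value in column $x$, the number of hypotheses taking the value $h^*(x)$ is at most the number taking the value $\operatorname{MAJ}_H(x)$; because these two counts are over disjoint sets of rows and together sum to at most $|H|$, each is at most $|H|/2$. Hence at most $|H|/2$ hypotheses agree with $h^*$ at such a column, so $N(x) \ge |H|/2$. This is the one place where the non-binary generalization needs care: in the binary case the majority value automatically covers at least half the rows, but with arbitrarily many values one must argue separately that any non-majority value can occupy at most half of the column.

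Then I would condition on the event $\{h^*(x) \neq \operatorname{MAJ}_H(x)\}$. Splitting the expectation over this event and its complement, dropping the nonnegative complementary term, and applying the per-column bound gives
$$\mathbb{E}[N(x)] \ge \frac{|H|}{2}\,\mathbb{P}(h^*(x) \neq \operatorname{MAJ}_H(x) \mid A).$$
Finally, Lemma~\ref{sec4lem1} supplies $\mathbb{P}(h^*(x) \neq \operatorname{MAJ}_H(x) \mid A) \ge \tfrac12$, which yields $\mathbb{E}[N(x)] \ge |H|/4$, establishing the claim.

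The main obstacle is really just the combinatorial counting argument for multi-valued columns; once the per-column $|H|/2$ bound is in hand, the rest is a one-line application of Lemma~\ref{sec4lem1}. I expect no difficulty in the probabilistic bookkeeping, but I would double-check that the conditioning on $A$ is carried consistently through the split, since the draw of $x$ and the event in Lemma~\ref{sec4lem1} are both taken under $\mathbb{P}(\cdot \mid A)$.
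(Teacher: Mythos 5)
Your proposal is correct and takes essentially the same route as the paper: the identical per-column bound $N(x) \ge |H|/2$ whenever $h^*(x) \neq \operatorname{MAJ}_H(x)$ (the paper phrases it via the containment $\{h : h(x) = \operatorname{MAJ}_H(x)\} \subseteq \{h : h(x) \neq h^*(x)\}$ rather than your disjoint-counts argument), followed by the same application of Lemma~\ref{sec4lem1} to get the factor $\tfrac12$. One minor wording slip: only the count of hypotheses taking the value $h^*(x)$ is guaranteed to be at most $|H|/2$, not ``each'' of the two counts, but that smaller count is the only one your argument actually uses, so the proof stands.
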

\begin{proof}
Consider counter-example $x \sim \mathbb{P}(\hat h, h^*)$ chosen in response to $\hat h$.  
By definition of $\operatorname{MAJ}_H $ (Definition~\ref{sec4majdef}) it follows that  $$ |\{ h \mid h(x)  = \operatorname{MAJ}_H(x) \}| \ge  |\{ h \mid h(x) = h^*(x) \}|.$$  
Consider a counter-example for which $h^*(x) \neq \operatorname{MAJ}_H(x)$. For such an $x$ we have that 
$$ \{ h \mid h(x)  = \operatorname{MAJ}_H(x) \} \subseteq \{ h \mid h(x) \neq h^*(x) \}$$
and therefore 
$$ |\{ h \mid h(x) \neq h^*(x) \}| \ge  |\{ h \mid h(x)  = \operatorname{MAJ}_H(x) \}| \ge |\{ h \mid h(x) = h^*(x) \}|.$$
Since, $H = \{ h \mid h(x) \neq h^*(x) \} \bigcup \{ h \mid h(x) = h^*(x) \}$ it follows that 
$$ |\{ h \mid h(x) \neq h^*(x) \}| \ge \frac{|H|}{2}.$$
Note that $ \{ h \mid h(x) \neq h^*(x) \}$ are exactly the  hypotheses in $H$ that get eliminated by counter-example $x$. This shows that a counter-example $x$ for which $h^*(x) \neq \operatorname{MAJ}_H(x)$ eliminates $|H|/2$ of the hypotheses in $H$.

By Lemma~\ref{sec4lem1}, $\mathbb{P}(h^*(x) \neq \operatorname{MAJ}_H(x) \mid A) \ge \frac12$. Thus,  with probability at least $1/2$, the counter-example chosen in response to $\hat h$ satisfies   $\mathbb{P}(h^*(x) \neq \operatorname{MAJ}_H(x))$.   
Thus, it follows that in expectation at least \(1/2 \cdot |H|/2 = |H|/4\) hypotheses are eliminated by a counter-example chosen in response to $\hat h$. 

\iffalse
Note that  counter-example $x$ eliminates all hypothesis $h \in H$ such that   $ h(x) \neq h^*(x)$. In other words only hypotheses $h$ that satisfy $h(x) = h^*(x)$ remain. Thus, with a counter-example $x$ for which $h^*(x) \neq \operatorname{MAJ}_H(x)$ the remaining hypotheses $h \in H$ such that   $ h(x) =\operatorname{MAJ}_H(x)$. and therefore eliminates at least half the remaining hypotheses. Thus, in expectation at least \(|H|/4\) hypotheses are eliminated by the counter-example. Formally:
%ME: Added intuition
 \begin{align*}
 \lefteqn{
  \mathbb{E}\left[ |\{ h \in H : h(x) \neq h^*(x) \}| \right]
 } \\
 & \geq
 \mathbb{P}(h^*(x) \neq \operatorname{MAJ}_H(x) \mid A)
 \cdot |\{ h \in H : h(x) \neq h^*(x) \mid h^*(x) \neq \operatorname{MAJ}_H(x) \}|
 \\
 & \geq
 \frac12 \cdot \frac{|H|}2 = \frac{|H|}4 . %\qedhere
\end{align*}
\fi
\end{proof}

\begin{theorem}
\label{sec4t1}
The Majority learning algorithm (Algorithm~\ref{alg1}) only needs to see an expected $\log_{\frac{4}{3}} |H|$ or $\mathcal{O}(\log{}|H|)$ counter-examples to learn $h^*$ in the general case.
\end{theorem}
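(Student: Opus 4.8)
The plan is to convert the per-round multiplicative progress guaranteed by Lemma~\ref{sec4lem2} into a bound on the expected number of rounds. By Lemma~\ref{sec4lem2}, whenever the learner presents its best majority hypothesis $\hat h$ and receives a counter-example, the expected number of hypotheses eliminated is at least $|H|/4$, where $|H|$ denotes the current size of the consistent set. Equivalently, if $N_t$ denotes the number of consistent hypotheses remaining after $t$ rounds, then $\mathbb{E}[N_{t+1} \mid N_t] \le \frac{3}{4} N_t$. The base $\frac{4}{3}$ appearing in the theorem statement is exactly $1/\tfrac34$, which strongly suggests the intended argument tracks this factor-$\tfrac34$ shrinkage and inverts it to a logarithm.

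First I would set up the potential $N_t$ and iterate the one-step inequality. Taking expectations and using the tower property gives $\mathbb{E}[N_t] \le \left(\tfrac34\right)^t |H|$. Learning is complete once only the target concept remains, i.e. once $N_t = 1$. The natural way to turn a bound on $\mathbb{E}[N_t]$ into a bound on the expected stopping time is to observe that the process stops as soon as $N_t$ drops to $1$; since $N_t$ is a positive-integer-valued quantity, $N_t \ge 1$ always, and $N_t > 1$ forces $N_t \ge 2$. Solving $\left(\tfrac34\right)^t |H| < 1$, i.e. $\left(\tfrac34\right)^t < 1/|H|$, yields $t > \log_{4/3} |H|$, which matches the claimed $\log_{4/3}|H| = \mathcal{O}(\log|H|)$ bound.

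The step I expect to be the main obstacle, and the place to be careful, is the passage from ``$\mathbb{E}[N_t]$ is small'' to ``the expected number of rounds is at most $\log_{4/3}|H|$.'' These are not the same quantity, and a clean argument needs either a stopping-time identity such as $\mathbb{E}[\tau] = \sum_{t \ge 0} \mathbb{P}(\tau > t)$ combined with $\mathbb{P}(\tau > t) = \mathbb{P}(N_t \ge 2) \le \mathbb{E}[N_t]/2 \le \tfrac12\left(\tfrac34\right)^t|H|$ via Markov's inequality, or a martingale/optional-stopping style argument on the potential. One subtlety is that the factor-$\tfrac34$ contraction in Lemma~\ref{sec4lem2} is stated for a fixed $h^*$ and for $\hat h$ chosen by the algorithm, so I must check that it applies uniformly at every round regardless of the history of counter-examples, using that the remaining consistent set is itself a valid concept class containing $h^*$ to which the lemma reapplies. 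Summing the geometric series $\sum_t \tfrac12\left(\tfrac34\right)^t|H|$ would give an $\mathcal{O}(\log|H|)$ bound but with a worse constant than the advertised $\log_{4/3}|H|$, so to land exactly on $\log_{4/3}|H|$ I anticipate the argument instead directly bounds the number of rounds needed for the expected size to fall below $1$ and appeals to the integrality of $N_t$ rather than routing through Markov's inequality.

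\medskip
\noindent\emph{Remark on the target bound.} If the intended reading is simply the asymptotic $\mathcal{O}(\log|H|)$, then the Markov-plus-geometric-series route suffices and is the most robust; the precise constant $\log_{4/3}|H|$ is most naturally read as the number of $\tfrac34$-contractions required to shrink $|H|$ down to $1$, so I would present the contraction bound $\mathbb{E}[N_t] \le \left(\tfrac34\right)^t|H|$ as the conceptual heart and then state the stopping-time consequence.
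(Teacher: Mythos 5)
You have the right one-step ingredient (Lemma~\ref{sec4lem2}, giving $\mathbb{E}[N_{t+1}\mid N_t]\le\frac34 N_t$ and hence $\mathbb{E}[N_t]\le(\frac34)^t|H|$), and you correctly flag the conversion from ``expected size decays geometrically'' to ``expected number of rounds is $\log_{4/3}|H|$'' as the crux --- but neither of the two resolutions you sketch actually closes it. The route via ``solve $(\frac34)^t|H|<1$ and appeal to integrality'' is not valid: the target concept is never eliminated, so $N_t\ge 1$ and $\mathbb{E}[N_t]\ge 1$ for all $t$; more fundamentally, the round at which an upper bound on $\mathbb{E}[N_t]$ would cross $1$ is not a bound on $\mathbb{E}[\tau]$ --- the expectation of a stopping time is not the time at which an expectation crosses a threshold. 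The Markov route, as you literally write it, sums $\sum_{t\ge0}\frac12(\frac34)^t|H|=2|H|$, which is linear in $|H|$, not logarithmic; it can be repaired by capping each summand $\mathbb{P}(\tau>t)$ at $1$, giving $\mathbb{E}[\tau]\le\log_{4/3}\frac{|H|}{2}+\sum_{s\ge0}(\frac34)^s=\log_{4/3}|H|+O(1)$, but even then you obtain only the asymptotic claim, not the stated constant.

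The paper closes the gap with a mechanism missing from your proposal: induction on $n=|H|$ combined with Jensen's inequality. Define $T(n)$ as the worst-case (over concept classes of size $n$, targets, and distributions) expected number of remaining queries. One round plus the contraction gives $T(n+1)\le 1+\sum_{r=1}^{n}\mathbb{P}[R=r]\,T(r)$; the inductive hypothesis turns the sum into $\mathbb{E}[\log_{4/3}R]$; concavity of the logarithm gives $T(n+1)\le 1+\log_{4/3}\mathbb{E}[R]\le 1+\log_{4/3}\bigl(\tfrac34(n+1)\bigr)=\log_{4/3}(n+1)$. This simultaneously handles your (legitimate) worry about uniformity over histories --- the worst-case definition of $T$ makes the recurrence apply to whatever consistent subclass survives --- and lands exactly on $\log_{4/3}|H|$ with no additive slack. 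If you want to keep your potential-function framing, the fix is to run the argument on $\log_{4/3}N_t$ rather than on $N_t$ itself.
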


\begin{proof}
The proof which is based on induction  follows along the line of the proof of Theorem 21  by Angluin and Dohrn~\citep{angluin2017power}. With $T(n)$ defined for the Majority algorithm as in~\citep{angluin2017power}, as the worst case expected number of queries for any concept class  $H$ with $|H| = n$ hypotheses, we re-define predicate $P(n): T(n) \le \log_{\frac{4}{3}} n$.  Note that the base case $P(1): T(1) \le 0$ trivially holds since no queries are needed when the learner's concept class has only one hypothesis.  Also, $P(2): T(2) \le \log_{\frac{4}{3}} 2$ since at most one query is needed when the learner's concept class has two hypotheses and therefore $T(2) \le 1 < \log_{\frac{4}{3}} 2$. Proceeding as in~\citep{angluin2017power} let us assume $P(r)$ is true for all positive
integers $r \le n$  for some $n \ge 2$. Let $H$ be any concept class with $|H| = n+1$ hypotheses. Let $R$ as defined in~\citep{angluin2017power} be the number of remaining consistent hypotheses that do not get eliminated by the teachers counter-example in a round of the Majority algorithm applied to $H$. Then as in ~\citep{angluin2017power}, 
$$T(n+1) \le 1 + \sum_{r=1}^ n \left(\mathbb{P}[R=r] \cdot T(r)\right).$$
Applying the inductive hypothesis,
$$T(n+1) \le 1 + \sum_{r=1}^ n \left(\mathbb{P}[R=r] \cdot \log_{\frac{4}{3}} r\right).$$
Applying Jensen’s Inequality,
$$T(n+1) \le 1 + \log_{\frac{4}{3}} \mathbb{E}[R].$$
Here $\mathbb{E}[R]$ is the expected number of   remaining hypotheses in $H$ that are consistent with the teacher's counter-example. 
By Lemma~\ref{sec4lem2},  the counter-example that is chosen  eliminates at least $\frac{1}{4}$ of the  hypotheses in expectation. Thus, $\mathbb{E}[R] \le \frac{3}{4} (n+1)$.  Thus,
$$T(n+1) \le 1 + \log_{\frac{4}{3}} \frac{3}{4} (n+1) = \log_{\frac{4}{3}} (n+1),$$
thus concluding the inductive step. Thus  it follows that Majority learning algorithm can learn the target concept in $\log_{\frac{4}{3}} |H|$ or $\mathcal{O}(\log{}|H|)$ expected rounds.
\end{proof}
%ME: Proof Fixed
Theorem~\ref{sec4t1} establishes that the expected learning time of the Majority algorithm is asymptotically optimal. However its expected learning time is $\log_{\frac{4}{3}}2 = 2.41$  times more than the $\log_{2} |H|$ expected learning time of the Max-Min algorithm~\citep{angluin2017power}. In addition there is a  gap between the learning time of the Majority algorithm and the $\log_{2} |H| -1 $  lower bound on the expected learning time for any LRC algorithm~\citep{angluin2017power}.  Can the analysis in Theorem~\ref{sec4t1}  for the Majority algorithm  be improved remains an interesting open question. One key difficulty for answering this question, as we show with Lemma~\ref{tightBound}, is that the bound in Lemma~\ref{sec4lem2} is tight. 
%ME: New above
\begin{theorem}
\label{sec4t2}
Given a $\delta$ such that $0 > \delta > 1$, the Majority learning algorithm will terminate in $\mathcal{O}(\log{}{\frac{|H|}{\delta}})$ rounds with probability at least $1 - \delta$.
\end{theorem}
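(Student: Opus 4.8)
\section*{Proof proposal for Theorem~\ref{sec4t2}}

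The plan is to upgrade the in-expectation contraction of Lemma~\ref{sec4lem2} into a high-probability guarantee through a supermartingale estimate followed by Markov's inequality. Let $N_t$ denote the number of consistent hypotheses remaining after the learner has processed $t$ counter-examples, so $N_0 = |H|$, and let $\{\mathcal{F}_t\}$ be the filtration generated by the learner's hypotheses and the teacher's counter-examples through round $t$. Because the target $h^*$ is consistent in every round we always have $N_t \ge 1$, and once the learner outputs $h^*$ I freeze the count by declaring $N_t = 1$ for all later $t$. Writing $\tau$ for the (random) total number of counter-examples received, the crucial observation is that the algorithm cannot have failed to finish after $t$ counter-examples unless at least two hypotheses are still consistent, so $\{\tau > t\} \subseteq \{N_t \ge 2\}$, and it therefore suffices to bound $\mathbb{P}(N_t \ge 2)$.

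First I would establish a one-step contraction for the shifted potential $Z_t := N_t - 1 \ge 0$, which subtracts off the always-consistent target. The point of the shift is that the raw contraction of Lemma~\ref{sec4lem2} holds only on rounds where a counter-example is actually returned, and so breaks down at the (possibly early) round where the learner first plays $h^*$; the subtraction repairs this at the absorbing state. Conditioning on $\mathcal{F}_t$, I would verify $\mathbb{E}[Z_{t+1}\mid\mathcal{F}_t]\le \tfrac34 Z_t$ by cases. If the run has already ended, or ends at this round, then $Z_{t+1}=0\le\tfrac34 Z_t$ since $Z_t\ge 0$. Otherwise the learner plays some $\hat h\ne h^*$, a counter-example is drawn, and Lemma~\ref{sec4lem2}, applied to the current consistent set whose best majority hypothesis is exactly $\hat h$, gives $\mathbb{E}[N_{t+1}\mid\mathcal{F}_t]\le\tfrac34 N_t$, whence $\mathbb{E}[Z_{t+1}\mid\mathcal{F}_t]\le\tfrac34 N_t-1=\tfrac34 Z_t-\tfrac14\le\tfrac34 Z_t$. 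Thus $\{Z_t\}$ is a genuine $\tfrac34$-supermartingale with $Z_0=|H|-1$, and iterating expectations yields $\mathbb{E}[Z_t]\le (3/4)^t(|H|-1)$.

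To finish I would invoke Markov's inequality. Since $\{\tau > t\}\subseteq\{N_t\ge 2\}=\{Z_t\ge 1\}$ and $Z_t$ is a non-negative integer, $\mathbb{P}(\tau > t)\le \mathbb{P}(Z_t\ge 1)\le \mathbb{E}[Z_t]\le (3/4)^t(|H|-1) < (3/4)^t|H|$. Choosing $t = \lceil \log_{4/3}(|H|/\delta)\rceil$ makes the right-hand side at most $\delta$, so with probability at least $1-\delta$ the algorithm terminates within $\log_{4/3}(|H|/\delta) = \mathcal{O}(\log\frac{|H|}{\delta})$ rounds, as claimed (the stated range on $\delta$ should read $0<\delta<1$).

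I expect the main obstacle to be exactly the stopping and adaptivity bookkeeping above: the contraction of Lemma~\ref{sec4lem2} is valid only while counter-examples keep arriving, and the learner's choices are history-dependent, so a naive attempt to assert $\mathbb{E}[N_{t+1}\mid\mathcal{F}_t]\le\tfrac34 N_t$ fails at termination (and at lucky early stops where $\hat h=h^*$ while $N_t\ge 2$). Passing to $Z_t=N_t-1$ is what makes the supermartingale property hold uniformly, and it is also what produces the additive $\log\frac1\delta$ term; note that a cruder restart-in-epochs argument built only on the expectation bound of Theorem~\ref{sec4t1} would give the weaker multiplicative rate $\mathcal{O}(\log|H|\cdot\log\frac1\delta)$ rather than $\mathcal{O}(\log\frac{|H|}{\delta})$.
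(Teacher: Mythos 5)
Your proof is correct and follows essentially the same route as the paper: establish the geometric decay of the expected number of remaining consistent hypotheses via Lemma~\ref{sec4lem2} (the paper shows $\mathbb{E}[R_i]\le(\frac34)^i|H|$ by induction) and finish with Markov's inequality, the only cosmetic difference being that the paper applies Markov at threshold $2$ to $R_i$ while you apply it at threshold $1$ to $R_i-1$. Your shift to $Z_t=N_t-1$ is a small technical refinement worth noting: the paper asserts $\mathbb{E}[R_{i+1}\mid R_i]\le\frac34 R_i$ unconditionally, which strictly speaking fails at the absorbing state $R_i=1$ and at early termination, a wrinkle your supermartingale bookkeeping handles cleanly without changing the final bound.
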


\begin{proof}
We use the same line of reasoning as given in~\citep{angluin2017power}.  Let $R_{i}$  be the number of consistent hypotheses remaining after $i$ rounds of the Majority algorithm. We first show by induction that $\mathbb{E}[R_i]  \le \left(\frac{3}{4}\right)^i \cdot |H|.$ By Lemma~\ref{sec4lem2},  the counter-example in a round of the Majority algorithm eliminates at least $\frac{1}{4}$ of the  hypotheses in expectation. Thus $\mathbb{E}[R_1] \le \frac{3}{4} \cdot |H|$ and the base case holds. Applying the inductive step for $i$ we get:
$$\mathbb{E}[R_{i+1}] = \mathbb{E}[\mathbb{E}[R_{i+1} \mid R_{i}]] \le \frac{3}{4} \cdot \mathbb{E}[R_{i}] \le \left(\frac{3}{4}\right)^{i+1} \cdot |H|, $$ thus completing the induction. The inequality $\mathbb{E}[R_{i+1} \mid R_{i}] \le \frac{3}{4} \cdot R_{i} $ used above follows from Lemma~\ref{sec4lem2}. 

After the $i$-th round the identity of the target concept is still unknown iff there are  two or more remaining consistent hypotheses or $R_i \ge 2$.  As $R_i$ is a non-negative random variable, we can apply  Markov inequality  to bound the probability of this event.
$$Pr(R_i \ge 2) \le  \frac{E[R_i]}{2} \le \frac{1}{2}\cdot \left(\frac{3}{4}\right)^{i} \cdot |H|.  $$  
Thus, for $i \ge \log_{\frac{4}{3}}{\frac{|H|}{\delta}}$ we have  $Pr(R_i \ge 2) \le \delta$ or $Pr(R_i \le 1) > 1- \delta$. Hence with probability   $ > 1 - \delta$ by round $ \log_{\frac{4}{3}}{\frac{|H|}{\delta}}$ there is at most  one    consistent hypotheses  remaining and the  Majority algorithm must  terminate having identified the target concept. 
\end{proof}
% ME: Fixed a bit but not completely. Need to work on this proof
%ME: Added new Lemma below
\begin{lemma}
\label{tightBound}
The bound  in Lemma~\ref{sec4lem2} on the fraction of hypotheses eliminated by the teachers counter-example in one round of the Majority algorithm is tight.
\end{lemma}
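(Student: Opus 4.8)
The plan is to prove tightness by exhibiting an explicit family of concept classes (parameterized by $n=|H|$) together with a target $h^*$ and a distribution $\mathbb{P}$ on which a single round of the Majority algorithm eliminates only $\tfrac{n}{4}+o(n)$ hypotheses in expectation, so that the guaranteed fraction cannot exceed $\tfrac14$. Since $e(x):=|\{h:h(x)\neq h^*(x)\}|\ge 1$ on every counter-example, exact equality with $n/4$ is unattainable in one round; tightness here is asymptotic, and it suffices to drive the eliminated fraction to $\tfrac14$ as $n\to\infty$. To design the example I would reverse-engineer the two inequalities in the proof of Lemma~\ref{sec4lem2}: I need the bound of Lemma~\ref{sec4lem1} to hold with equality, $\mathbb{P}(h^*(x)\neq\operatorname{MAJ}_H(x)\mid A)=\tfrac12$, and I need the counter-examples on which $h^*$ disagrees with $\operatorname{MAJ}_H$ to eliminate exactly $n/2$ hypotheses while the counter-examples on which $h^*$ agrees with $\operatorname{MAJ}_H$ eliminate only $o(n)$.

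Concretely, I would split the support of $\mathbb{P}$ (the columns of $H$) into three groups, taking $\operatorname{MAJ}_H$ to be the all-$0$ hypothesis. On the first group $A_1$ the column is a balanced $0/1$ split, with $\hat h(x)=0=\operatorname{MAJ}_H(x)$ and $h^*(x)=1$; such a counter-example eliminates the $n/2$ hypotheses taking the majority value, giving $e(x)=n/2$. On the second group $A_2$ the value $0$ is strongly dominant, with $h^*(x)=0=\operatorname{MAJ}_H(x)$ and $\hat h(x)=1$; here only the few dissenters are eliminated, so $e(x)=o(n)$. Giving $A_1$ and $A_2$ equal probability forces $\mathbb{P}(h^*\neq\operatorname{MAJ}_H\mid A)=\tfrac12$, so that the expected elimination is $\tfrac12\cdot\tfrac n2+\tfrac12\cdot o(n)=\tfrac n4+o(n)$, matching Lemma~\ref{sec4lem2}. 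The third group $A^{c}$ consists of columns on which $\hat h$ and $h^*$ agree, hence lie outside $A$ and never serve as counter-examples for this pair.

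The hard part will be certifying that $\hat h$ is genuinely a best majority hypothesis in the sense of Definition~\ref{sec4hhatdef}, and that $h^*$ is tied with it. The tension is intrinsic: to make the $A_2$ counter-examples cheap I must make $\operatorname{MAJ}_H$ dominant there, but then almost every other hypothesis agrees with $\operatorname{MAJ}_H$ on $A_2$, which would push some filler hypothesis above $\hat h$ in majority agreement and make it, rather than $\hat h$, the algorithm's pick. A counting argument shows this cannot be repaired using $A_1\cup A_2$ alone. This is exactly the purpose of the third group: on the columns of $A^{c}$ I let $\hat h$ and $h^*$ take the majority value while forcing the filler hypotheses to disagree with $\operatorname{MAJ}_H$, spreading these disagreements over enough columns that $0$ remains the majority on each. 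Choosing the weight of $A^{c}$ large enough (a constant fraction dominating that of $A_1$) offsets the agreement the fillers gain on $A_2$, so that every filler's overall agreement $\mathbb{P}(h=\operatorname{MAJ}_H)$ is at most that of $\hat h$ and $h^*$. Once the balance of these three weights is verified and a symmetric design is used so that all fillers behave identically, both $\hat h$ and $h^*$ are best majority hypotheses, the chosen round realizes $\tfrac n4+o(n)$ expected eliminations, and the bound of Lemma~\ref{sec4lem2} is therefore tight.
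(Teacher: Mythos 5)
Your overall strategy is the same as the paper's: reverse-engineer Lemmas~\ref{sec4lem1} and~\ref{sec4lem2} by building an instance in which the conditional mass of $D(\hat h,h^*)$ splits evenly between a balanced column (whose counter-example eliminates $|H|/2$ hypotheses) and a column on which $h^*$ takes the near-unanimous majority value (whose counter-example eliminates $O(1)$), yielding $\tfrac14|H|+o(|H|)$ expected eliminations. The gap is that you stop at a design specification rather than a construction: the step you yourself flag as the hard part --- certifying that $\hat h$ is the hypothesis the algorithm actually queries --- is only asserted to be fixable by tuning the weight of $A^{c}$, and no explicit matrix, distribution, or verification is ever produced. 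For a tightness lemma whose entire content is an exhibit, this is the proof. The paper discharges it with one clean combinatorial choice: $H$ consists of all $\binom{2n}{n}$ balanced $0/1$ rows over $2n$ columns plus an all-zero dummy column, with $\mathbb{P}$ uniform; then $\operatorname{MAJ}_H$ is all zeros and \emph{every} hypothesis has identical agreement $n/(2n+1)$ with it, so no filler can overtake $\hat h$ and no three-way weight balancing is needed. A single row is then perturbed (one bit moved into the dummy column) to create exactly the one cheap column in its difference set with a suitable $h^*$.

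A second loose end: you arrange for $\hat h$ and $h^*$ to be \emph{tied} as best majority hypotheses, but Definition~\ref{sec4hhatdef} breaks ties lexicographically, and if $h^*$ wins that tie the learner queries $h^*$ itself, receives no counter-example, and the round you are analyzing never happens. You must additionally ensure $\hat h$ precedes $h^*$ (and every other tied hypothesis) in the lexicographic order; the paper's perturbation does exactly this by giving $\hat h$ zeros in its first $n+1$ columns. Neither issue is unfixable --- your three-group design can be completed along the lines you sketch --- but as written the argument is a blueprint for an example rather than an example.
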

\begin{proof}
We  present an example to show that the bound is tight. Consider a binary matrix $H'$ over $2n+1$ columns in which the last column has only zeros while the values in the first $2n$ columns represent all possible  combinations of an equal number of zeros  and ones. In other word, $H'$ has ${2n\choose n}$ distinct rows each consisting of a unique combination of $n$ zeros  and $n$ ones in the first $2n$ columns and a zero in the last column. Note that $H$ also has an equal number of zeros and ones in each of the first $2n$ columns. Consider the row $r$ of $H'$ that has all zeros in the first $n$ columns and has all ones in the next $n$ columns. We construct  a new matrix $H$ from $H'$ by modifying the values in row $r$ of $H'$ as follows:  toggle  the value in the $n+1$-th column from one to zero and  toggle the value in the last column from zero to one. Note that this new matrix $H$ also has $n$ zeros and $n$ ones in every row. 

Let the learners concept class be $H$. Let $\mathbb{P}$  be the uniform probability distribution over $X$, the columns of $H$. Let $h'$ denote the hypothesis in $H$ that corresponds to row $r$ of $H$.   Consider the first round of the Majority algorithm. By Definition~\ref{sec4majdef},  the majority hypothesis \(\operatorname{MAJ}_H\) has all zeros. This is because in each of the columns of $H$, either the number of zeros and ones is the same and the tie breaking rule favors zeros over ones. Or, there are more zeros  (i.e. in $n+1$-th column and last column), in which case the column majority is zero. Therefore  all hypotheses $h \in H$ have the same probability \(\mathbb{P}(h(x) = \operatorname{MAJ}_H(x))\) as they all have $n$ zeros. By Definition~\ref{sec4hhatdef}, the majority hypothesis  \(\hat h \in H \) is therefore $h'$. This is because, ties for the majority hypothesis are broken in favor of the smallest one in a lexicographic sort. As  $h'$ has all zeros in its first $n+1$ columns while every other hypothesis $h \in H$  has at least one one in the  first $n+1$ columns, $h'$ is smaller than every other hypotheses in a lexicographic sort. Let the teachers' hypothesis $h^*$  differ from \(\hat h = h' \) in two columns: the first column (where it has a one) and the last column (where it has a zero). Note that such $h^*$ is in $H$ since $H$ was constructed from all possible ${2n\choose n}$ combinations of equal number of ones and zeros. Note also that the teachers counter-example is either for the value in the first column or for the value in the last column, each with probability $1/2$. If it is the former,  it eliminates half the hypothesis in $H$, since there is an equal number of ones and zeros in the first column. If it is the latter, it only eliminates one hypothesis (the learners hypothesis \(\hat h \)). Thus the expected fraction of hypotheses that get eliminated in the first round by the teachers counter-example is $$ \frac{1}{|H|}  \left( \frac{1}{2} \cdot \frac{|H|}{2} + \frac{1}{2} \cdot 1 \right) = \frac{1}{4} + \frac{1}{|H|},  $$ which approaches $1/4$ as $|H|$ becomes large.
\end{proof}
%ME: Maybe add some pictures to explain the example
\begin{theorem}
\label{fast}
Each round of the majority algorithm can be implemented in time $O(|H||X|)$
\end{theorem}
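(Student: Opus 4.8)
The plan is to account for the cost of each line of Algorithm~\ref{alg1} separately and observe that every step fits within an $O(|H||X|)$ budget, so that the whole round does too. The two nontrivial pieces are the construction of the best majority hypothesis $\hat h$ (lines 1--2 of the loop body) and the elimination step; the counter-example itself is supplied by the teacher and costs the learner nothing.

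First I would bound the cost of building $\operatorname{MAJ}_H$. Following Definition~\ref{sec4majdef}, for each of the $|X|$ columns I scan its $|H|$ entries and tally the frequency of each value, tracking the most frequent value and breaking ties toward the smaller element during the scan. Each column takes $O(|H|)$ time, so $\operatorname{MAJ}_H$ is assembled in $O(|H||X|)$ total.

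Next I would handle the selection of $\hat h$ per Definition~\ref{sec4hhatdef}. For each hypothesis $h \in H$ I compute its majority score $\mathbb{P}(h(x) = \operatorname{MAJ}_H(x)) = \sum_{x : h(x) = \operatorname{MAJ}_H(x)} \mathbb{P}(x)$ by a single pass over the $|X|$ columns, using the precomputed $\operatorname{MAJ}_H$ and the given column probabilities; this is $O(|X|)$ per hypothesis and $O(|H||X|)$ across all of $H$. To extract the maximizer I maintain a running best pair (best score, best hypothesis) in one linear scan: when a strictly larger score appears I update it, and when an equal score appears I resolve the tie by a lexicographic comparison against the current best. This is where the main care is needed. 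A single lexicographic comparison costs $O(|X|)$, and in the worst case (all scores equal) I perform $|H|$ of them, so the tie-breaking contributes $O(|H||X|)$ rather than the $O(|H|)$ one might naively hope for; fortunately this still stays within budget.

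Finally, the elimination step removes every $h$ with $h(x) \neq h^*(x)$ for the returned counter-example $x$; inspecting column $x$ for all $|H|$ hypotheses is $O(|H|)$, or $O(|H||X|)$ if rows are physically deleted, and in either case is comfortably within the bound. Summing the three contributions, each of which is $O(|H||X|)$, shows that a single round of the Majority algorithm runs in $O(|H||X|)$ time. The principal obstacle to watch is the lexicographic tie-breaking rule, which prevents the selection step from being purely linear in $|H|$; verifying that it nonetheless remains $O(|H||X|)$ is the crux of the argument.
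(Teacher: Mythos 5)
Your proof is correct and follows essentially the same decomposition as the paper's: build $\operatorname{MAJ}_H$ column by column in $O(|H||X|)$, then score each hypothesis in $O(|X|)$ to select $\hat h$. You are in fact more careful than the paper, which does not explicitly account for the cost of the lexicographic tie-breaking or the elimination step; your observation that the tie-breaking comparisons total $O(|H||X|)$ in the worst case fills a small gap the paper's proof glosses over.
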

\begin{proof}
By Definition~\ref{sec4majdef}, computation of $\operatorname{MAJ}_H(x)$ requires finding maximum values in each column of matrix $H$, which can be done in time $O(|H|)$ per column. Since there are $O(|X|)$ columns in $H$,  $\operatorname{MAJ}_H(x)$ can be computed in time  $O(|H||X|)$. Computation of
\(\mathbb{P}(h(x)) \) for any $h \in H$ takes time $O(|X|)$. Therefore, by Definition~\ref{sec4hhatdef},  the learners hypothesis \(\hat h\) can  be computed in time $O(|H||X|)$.   
\end{proof}
%ME: Added new theorem and some discussion on it
One of the primary advantages of the Majority algorithm (and also the randomized algorithm described in the next section) over the Max-Min algorithm is the per round running time for computing the next hypothesis. In the case of Max-Min algorithm, the computation of the weights of all the edges in the elimination graph~\citep{angluin2017power} entails a computation time of $O(|H|^2|X|)$. Compared to this the $O(|H||X|)$ per round computation time requirement of the Majority algorithm is  a significant improvement, particularly when $|H|$ is large.
%%%%%%%%%%%%%%%%%%%%%%%%%%%%%%%%%%%%%%%%%%%%%%%%%%%%%%%%%%%%%%%%%%%%%%%%%%%
%               Randomized Learning Algorithm
%%%%%%%%%%%%%%%%%%%%%%%%%%%%%%%%%%%%%%%%%%%%%%%%%%%%%%%%%%%%%%%%%%%%%%%%%%%

\subsection{Randomized Learning Algorithm}
\label{ssrla}
In this section, an open problem posed by Angluin and Dohrn~\citep{angluin2017power} regarding the existence of an efficient randomized algorithm is solved. In this version of the LRC model, the teacher's target concept  $h^*$ is drawn from the learner's concept class $H$ according to a known probability distribution $\mathbb{Q}$. A Randomized learning algorithm is presented and mathematically analyzed and is shown to be asymptotically optimal. 
%Due to lack of space, only the statements of Lemmas and Theorems are presented. Their proofs can be found in the report by the author~\citep{DBLP:journals/corr/abs-1810-00506}.
\begin{definition}
\label{qdef}
$\mathbb{Q}$ denotes a known teacher's probability distribution of drawing the target concept $h^*$ from $H$.
\end{definition}
This learning algorithm works as follows. In the first round of the Randomized learning algorithm (Algorithm~\ref{alg2}), the learner draws a hypothesis randomly from $H$  according to the distribution $\mathbb{Q}$. When presented with a counter-example, the learner updates $H$ by removing the hypotheses that disagree with the counter-example. The learner also updates the teacher's distribution $\mathbb{Q}$ to match the new $H$. The learner draws the hypothesis for the next round from the updated $H$ according to the updated distribution $\mathbb{Q}$. This process continues until the learner correctly learns the target concept.

\begin{definition}
\label{s}
In the Randomized learning algorithm (Algorithm~\ref{alg2}), the set of consistent hypotheses evolves over time by the sequence denoted by $H_1, H_2, H_3, \ldots $. Here $H_1 = H$ and $H_1 \supset H_2 \supset H_3 \ldots$. The corresponding evolution of the teacher's probability distribution $\mathbb{Q}$ by the learner over this set is denoted by the sequence $\mathbb{Q}_1, \mathbb{Q}_2, \mathbb{Q}_3, \ldots $, where $\mathbb{Q}_i$ is a distribution on the hypothesis set $H_i$. Furthermore, for the Randomized algorithm,  the sequence  of learner's hypothesis is denoted by $h_1, h_2, h_3, \ldots $ and the corresponding sequence of counter-examples is denoted by $x_1, x_2, x_3, \ldots $.
%Here, $\mathbb{Q}_1 = \mathbb{Q}$.
\end{definition}
%ME: Added the sequence of counter-examples.
%ME: Added
\begin{definition}
For $i \ge 1$, $(h_i,x_i)$ denotes the pair of learner's hypothesis $h_i$  and the  corresponding  counter-example $x_i$ in round $i$ of the Randomized algorithm. For $i \ge 1$, $R_i = \{(h_1,x_1),(h_2,x_2) \ldots (h_i,x_i)\}$ denotes the  sequence  of these pairs  in the first $i$ rounds of the Randomized algorithm. $R_0 = \{ \}$ denotes the empty sequence of pairs. For $i \ge 1$, the notation $R_i = R_{i-1} + \{(h_i,x_i)\} $ is used to indicate that the sequence $R_i$ is a result of adding the pair $(h_i,x_i)$ to the sequence $R_{i-1}$.  
\end{definition}
The probability distributions $\mathbb{Q}_i$ are defined as follows.
$\mathbb{Q}_1 = \mathbb{Q}$ is set to the known teacher's distribution over $H$.
For $i \ge 1$,  $\mathbb{Q}_{i+1}$ is set to  the  teacher's posterior distribution over  $H_{i+1}$ given $R_i$. Specifically for $i \ge 1$ denote $q^i_j$ to be the probability the learner draws $h_j \sim \mathbb{Q}_{i}$. Then for $h_j \in H_{i+1}$,

$$q^{i+1}_j =  Pr(h^* = h_j \mid  R_i).$$
We now show how these probabilities can be recursively computed. 
\begin{lemma}
\label{lem3}
For $i \ge 1$, the teacher's posterior distribution over  $H_{i+1}$ given $R_i$ and hence the probability distribution $\mathbb{Q}_{i+1}$ can be recursively computed as 
$$
q^{i+1}_j =  \frac{ q^{i}_j  \mathbb{P}[x_i \mid h_j(x) \neq h_i(x) ] }{\sum_{h_k \in H_{i+1}} q^{i}_k  \mathbb{P}[x_i \mid h_k(x) \neq h_i(x) ]}
$$
\end{lemma}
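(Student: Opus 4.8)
The plan is to recognize the claimed identity as a single step of Bayesian posterior updating and derive it from Bayes' rule together with two conditional-independence facts specific to the LRC protocol. Writing $R_i = R_{i-1} + \{(h_i,x_i)\}$, the quantity to compute is $q^{i+1}_j = Pr(h^* = h_j \mid R_i) = Pr(h^* = h_j \mid R_{i-1}, h_i, x_i)$, and Bayes' rule expresses this as the prior times the likelihood of the new observation, divided by the marginal likelihood of that observation.

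First I would establish the prior. Since the learner selects $h_i$ using only the information in $R_{i-1}$ (it draws $h_i \sim \mathbb{Q}_i$, a distribution determined by $R_{i-1}$, with no access to $h^*$), conditioning additionally on $h_i$ leaves the target posterior unchanged, so $Pr(h^* = h_j \mid R_{i-1}, h_i) = Pr(h^* = h_j \mid R_{i-1}) = q^i_j$. This identifies the prior in the Bayes update with $q^i_j$. Next I would compute the likelihood: by the LRC protocol, given $h^* = h_j$ and query $h_i$, the counter-example column is drawn from $\mathbb{P}(h_i, h_j)$, so $Pr(x_i \mid h^* = h_j, h_i) = \mathbb{P}[x_i \mid h_j(x) \neq h_i(x)]$, which depends only on the pair $(h_i, h_j)$ and not on the earlier history $R_{i-1}$. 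Combining the prior and likelihood gives a numerator of $q^i_j \, \mathbb{P}[x_i \mid h_j(x) \neq h_i(x)]$, and the denominator is obtained by the law of total probability, summing this same expression over all candidate targets $h_k$ in the support of the prior $\mathbb{Q}_i$.

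The hard part will be justifying that the denominator sum may be restricted to $h_k \in H_{i+1}$ rather than ranging over all of $H_i$, and here the non-binary values require care. The observed counter-example carries both the column $x_i$ and the revealed value $h^*(x_i)$. For a hypothesis $h_k \notin H_{i+1}$ we have $h_k(x_i) \neq h^*(x_i)$, and I would split into two cases: if $h_k(x_i) = h_i(x_i)$ then $x_i \notin D(h_i, h_k)$, so the teacher could never return column $x_i$ under $h^* = h_k$; if instead $h_k(x_i)$ is a third value distinct from both $h_i(x_i)$ and $h^*(x_i)$, then although $x_i \in D(h_i, h_k)$, the value the teacher would reveal is $h_k(x_i) \neq h^*(x_i)$, contradicting the observed value. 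In either case the likelihood of the observed counter-example under $h^* = h_k$ is zero, so those terms drop out of the sum.

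Restricting the sum to $h_k \in H_{i+1}$ and dividing the numerator by the denominator then yields exactly the stated recursion. I would close by noting that the same vanishing-likelihood argument shows $q^{i+1}_j = 0$ for $h_j \notin H_{i+1}$, so $\mathbb{Q}_{i+1}$ is supported on $H_{i+1}$, and that the normalization built into the denominator makes $\sum_{h_j \in H_{i+1}} q^{i+1}_j = 1$, confirming $\mathbb{Q}_{i+1}$ is a genuine probability distribution on $H_{i+1}$.
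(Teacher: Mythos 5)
Your proof is correct and rests on the same two pillars as the paper's: Bayes' rule, plus the facts that the learner's query $h_i$ is conditionally independent of $h^*$ given $R_{i-1}$ and that the counter-example is drawn from $\mathbb{P}(h_i,h^*)$ independently of the earlier history. The organization differs, though. The paper unrolls the joint likelihood $Pr(R_i \mid h^*=h_j)$ over the entire history and applies Bayes' theorem a second time to re-express $Pr(R_{i-1}\mid h^*=h_j)$ in terms of $q^i_j$, letting the $Pr(R_{i-1})$ and $Pr(h_i\mid\cdot)$ factors cancel; you instead perform a single-step posterior update, first arguing that conditioning on $h_i$ leaves the posterior $Pr(h^*=h_j\mid R_{i-1})=q^i_j$ unchanged and then multiplying by the one-round likelihood. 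The two derivations are algebraically equivalent, but yours is shorter and, more importantly, fills a gap the paper leaves implicit: the paper writes $Pr(R_i)=\sum_{h_k\in H_{i+1}}Pr(R_i\mid h^*=h_k)Pr(h^*=h_k)$ without explaining why the sum may omit the hypotheses in $H_i\setminus H_{i+1}$. Your two-case argument --- either $x_i\notin D(h_i,h_k)$ so the column could not have been returned, or $h_k(x_i)$ is a third value inconsistent with the revealed $h^*(x_i)$ --- is exactly the justification needed, and it is the place where the non-binary setting genuinely matters (in the binary case only the first case arises). Your closing observation that the same vanishing-likelihood argument shows $\mathbb{Q}_{i+1}$ is supported on $H_{i+1}$ and properly normalized is also a worthwhile addition that the paper does not state.
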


\begin{proof}

Applying Bayes' theorem:
\begin{equation}
\label{bayes1}
q^{i+1}_j = Pr(h^* = h_j \mid R_i) = \frac{Pr(R_i \mid h^* = h_j)\cdot Pr(h^* = h_j)}{Pr(R_i)}.
\end{equation}

Consider $Pr(R_i \mid h^* = h_j)$ for $i \ge 1$. This can be written as 
$$Pr(R_{i-1} + \{(h_i,x_i)\} \mid h^* = h_j) = Pr(R_{i-1} \mid h^* = h_j) \cdot Pr(\{(h_i,x_i)\} \mid R_{i-1}  \wedge h^* = h_j).$$
In other words it is the conditional probability that 
$R_{i-1}$ is the sequence of pairs in the first $i-1$ rounds of the Randomized algorithm 
given that teacher's hypothesis is $h_j$ times the conditional probability that $\{(h_i,x_i)\}$ is the pair of hypothesis and  counter-example in the $i$-th round   given teacher's hypothesis $h_j$ and the sequence of pairs $R_{i-1}$ in the first $i-1$ rounds of the Randomized algorithm.
Applying Bayes' theorem  we get 
$$Pr(R_{i-1} \mid h^* = h_j)=  \frac{Pr(h^* = h_j \mid R_{i-1})Pr(R_{i-1})}{Pr(h^* = h_j)} = \frac{q^{i}_j Pr(R_{i-1})}{Pr(h^* = h_j)}.$$
The last equality follows from the definition of $q^{i}_j$.
Thus we have shown
\begin{equation}
\label{bayes2}
Pr(R_i \mid h^* = h_j) = \frac{q^{i}_j Pr(R_{i-1}) Pr(\{(h_i,x_i)\} \mid R_{i-1}  \wedge h^* = h_j)}{Pr(h^* = h_j)} 
\end{equation}
Substituting in Equation (\ref{bayes1}) we get:
\begin{equation}
\label{bayes3}
q^{i+1}_j =   \frac{ q^{i}_j Pr(\{(h_i,x_i)\} \mid R_{i-1}  \wedge h^* = h_j) Pr(R_{i-1})}{Pr(R_i)}.
\end{equation}
Note that $Pr(R_i) = \sum_{h_k \in H_{i+1}} Pr(R_i \mid h^* = h_k)Pr(h^* = h_k)$. Substituting in Equation (\ref{bayes3}) and applying Equation  (\ref{bayes2}) we get:

$$ q^{i+1}_j =  \frac{ q^{i}_j  Pr(\{(h_i,x_i)\} \mid R_{i-1}  \wedge h^* = h_j) Pr(R_{i-1})}{\sum_{h_k \in H_{i+1}} q^{i}_k  Pr(\{(h_i,x_i)\} \mid R_{i-1}  \wedge h^* = h_k) Pr(R_{i-1})}.$$

By simplifying we get
\begin{equation}
\label{bayes4}
q^{i+1}_j =  \frac{ q^{i}_j  Pr(\{(h_i,x_i)\} \mid R_{i-1}  \wedge h^* = h_j) }{\sum_{h_k \in H_{i+1}} q^{i}_k  Pr(\{(h_i,x_i)\} \mid R_{i-1}  \wedge h^* = h_k)}.
\end{equation}

Note that for any $h_k \in H_{i+1}$,
$$Pr(\{(h_i,x_i)\} \mid R_{i-1}  \wedge h^* = h_k) = Pr(h_i \mid R_{i-1}  \wedge h^* = h_k) \cdot Pr(x_i  \mid R_{i-1}  \wedge h_i \wedge h^* = h_k))$$
Here $Pr(h_i \mid R_{i-1}  \wedge h^* = h_k)$ is the conditional probability of the Randomized algorithm selecting hypothesis $h_i$ given the teacher's hypothesis $h_k$ and given its sequence of hypothesis and counter-example pairs $R_{i-1}$ in the first $i-1$ rounds. Also $Pr(x_i  \mid R_{i-1}  \wedge h_i \wedge h^* = h_k))$ is the conditional probability of  getting counter-example $x_i$ with the Randomized algorithm having selected hypothesis $h_i$ in its $i$-th round and given the teacher's hypothesis $h_k$ and given the sequence of hypothesis and counter-example pairs $R_{i-1}$ in its first $i-1$ rounds. Note that the probability $Pr(h_i \mid R_{i-1}  \wedge h^* = h_k)$  is independent of the choice of the teacher's hypothesis $h^* = h_k$ and it equals $Pr(h_i \mid R_{i-1})$. Thus,
substituting in Equation (\ref{bayes4}) and simplifying we have:
\begin{equation}
\label{bayes5}
q^{i+1}_j =  \frac{ q^{i}_j  Pr(x_i  \mid R_{i-1}  \wedge h_i \wedge h^* = h_j)) }{\sum_{h_k \in H_{i+1}} q^{i}_k  Pr(x_i  \mid R_{i-1}  \wedge h_i \wedge h^* = h_k))}.
\end{equation}
Note that the probabilities $Pr(x_i  \mid R_{i-1}  \wedge h_i \wedge h^* = h_k))$ are independent of $R_{i-1}$ and are distributed as $\mathbb{P}(h_i,h_k)$ (as defined in Definition \ref{pdef}). That is 
$$ 
Pr(x_i \mid R_{i-1}  \wedge h_i \wedge h^* = h_k)) = \mathbb{P}[x_i \mid h_k(x) \neq h_i(x) ]
$$

Substituting in Equation (\ref{bayes5}) the result follows. 
Thus, we have shown that the the  teacher's posterior distribution over  $H_{i+1}$ given $R_i$,  which is also the probability distribution  $\mathbb{Q}_{i+1}$ used by the Randomized algorithm, can be recursively computed from the probability distribution $\mathbb{Q}_{i}$ and the probability distribution  $\mathbb{P}(h_i,h_k)$ over the hypotheses $h_k \in H_{i+1}$.

\end{proof}

\begin{algorithm}[h!]
\label{alg2}
%The following algorithm is used by the learner:\\
% 
$r = 1$, $H_1 = H$, $\mathbb{Q}_1 = \mathbb{Q}$\\
 \While{true}{	
 	Draw the learner's hypothesis $h_r \in H_r$ randomly from $\mathbb{Q}_r$.\\
	Let $x_r$ be the counter-example returned by the teacher.\\
	\If{ there is no such counter-example}
	{Output $h_r$.}
	\Else{
	$H_{r+1} = H_r - \{h \in H_r \mid h(x_r) \neq h^*(x_r)\} $.\\
  Calculate $\mathbb{Q}_{r+1}$ as described in Lemma~\ref{lem3}.\\
	$r= r+ 1.$\\}
 }

 \caption{Randomized Learning Algorithm}
\end{algorithm}
%ME: Changed x to x_l

The analysis for Algorithm~\ref{alg2} works as follows.
In Lemma~\ref{lem4}, a fact is proven using which it is shown in Lemma~\ref{lem5} that the expected fraction of eliminated hypotheses by the counter-example given is at least $\frac{1}{2}.$ 
In other words, $\mathbb{E}\left[|H_{i+1}| \right] \le |H_i|/2$. 
Using this result, it is shown in Theorem~\ref{t2} that the expected learning time of Algorithm~\ref{alg2} is $\mathcal{O}(\log{}|H|)$. 
%ME: CHanged running to learning
\\The analysis below is for the $i$-th round of the algorithm and omits the index $i$ wherever possible.
\begin{definition}
For $h \in H_i$, define $V(h,x)$ as the fraction of hypotheses in $H_i$ that disagree with $h$ on example $x$. More formally, $$V(h,x) = \frac{|\{h' \in H_i \mid h'(x) \neq h(x)\}|}{|H_i|}.$$ %Note that at the beginning of the learning process, $H$ is the set of all hypotheses and $|H| = n$.\\
\end{definition}
\begin{lemma}
\label{lemNew}
In any $i$-th round of the algorithm ($i\ge1$), for any two hypotheses $h_1,\ h_2 \in H_i$ where $h_1(x) \neq h_2(x)$, $V(h_1, x) + V(h_2, x) \ge 1$.
\end{lemma}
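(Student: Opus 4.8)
The plan is to reduce the inequality to a simple set-covering observation at the single column $x$. First I would introduce the two sets $S_1 = \{h' \in H_i \mid h'(x) \neq h_1(x)\}$ and $S_2 = \{h' \in H_i \mid h'(x) \neq h_2(x)\}$, so that by the definition of $V$ we have $V(h_1,x) = |S_1|/|H_i|$ and $V(h_2,x) = |S_2|/|H_i|$. The target inequality then becomes the purely combinatorial claim $|S_1| + |S_2| \ge |H_i|$.

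The key step is to establish that $S_1 \cup S_2 = H_i$, i.e.\ that every hypothesis in $H_i$ disagrees with at least one of $h_1, h_2$ on the column $x$. This is precisely where the assumption $h_1(x) \neq h_2(x)$ enters: if some $h' \in H_i$ belonged to neither $S_1$ nor $S_2$, then $h'(x) = h_1(x)$ and $h'(x) = h_2(x)$, forcing $h_1(x) = h_2(x)$ and contradicting the hypothesis. Hence no hypothesis can avoid both sets, so their union is all of $H_i$.

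From $S_1 \cup S_2 = H_i$ the result is immediate by inclusion--exclusion: $|S_1| + |S_2| = |S_1 \cup S_2| + |S_1 \cap S_2| \ge |H_i|$. Dividing through by $|H_i|$ gives $V(h_1,x) + V(h_2,x) \ge 1$, as required.

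I do not expect a genuine obstacle here; the statement is a one-line counting fact once the trichotomy on the value $h'(x)$ (equal to $h_1(x)$, equal to $h_2(x)$, or equal to neither) is made explicit. The only point worth flagging is that a hypothesis taking a \emph{third} value at $x$ lands in both $S_1$ and $S_2$, which only strengthens the count, so the bound is never in jeopardy. In the binary case every $h'$ takes one of the two values $h_1(x)$ or $h_2(x)$, so $S_1 \cap S_2 = \emptyset$ and the inequality holds with equality, which is the tight regime that makes this lemma the right tool for the subsequent $\tfrac12$-elimination argument.
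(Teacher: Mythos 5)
Your proof is correct and is essentially the paper's argument in a cleaner packaging: the paper expands $V(h_1,x)+V(h_2,x)$ into four fractions whose first three numerators partition $H_i$ (using the same trichotomy on $h'(x)$), while you phrase the identical fact as $S_1\cup S_2=H_i$ plus inclusion--exclusion, with $S_1\cap S_2$ playing the role of the paper's leftover nonnegative term. No gap; nothing further needed.
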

\begin{proof}
Note that $V(h_1, x) =$
$$ \frac{|\{h' \in H_i \mid h'(x) \neq h_1(x)\}|}{|H_i|} = $$
$$ \frac{|\{h' \in H_i \mid h'(x) \neq h_1(x) \land h'(x) = h_2(x)\}|}{|H_i|} + \frac{|\{h' \in H_i \mid h'(x) \neq h_1(x) \land h'(x) \ne h_2(x)\}|}{|H_i|}.$$
Likewise $V(h_2, x)=$
$$\frac{|\{h' \in H_i \mid h'(x) \neq h_2(x) \land h'(x) = h_1(x)\}|}{|H_i|} + \frac{|\{h' \in H_i \mid h'(x) \neq h_2(x) \land h'(x) \ne h_1(x)\}|}{|H_i|}.$$
Therefore $V(h_1, x) + V(h_2, x) =$ 
%$$ \frac{|\{h' \in H_i \mid h'(x) \neq h_1(x)\}|}{|H_i|} + \frac{|\{h' \in H_i \mid h'(x) \neq h_2(x)\}|}{|H_i|}$$
$$= \frac{|\{h' \in H_i \mid h'(x) \neq h_1(x) \land h'(x) = h_2(x)\}|}{|H_i|} + \frac{|\{h' \in H_i \mid h'(x) \neq h_2(x) \land h'(x) = h_1(x)\}|}{|H_i|} $$
$$+ \frac{|\{h' \in H_i \mid h'(x) \neq h_1(x) \land h'(x) \ne h_2(x)\}|}{|H_i|} + \frac{|\{h' \in H_i \mid h'(x) \neq h_1(x) \land h'(x) \ne h_2(x)\}|}{|H_i|}.$$
Note that the numerator of the first three fractions adds up to exactly $|H_i|$. Hence we have 
$$V(h_1, x) + V(h_2, x) = \frac{|H_i|}{|H_i|} + \frac{|\{h' \in H_i \mid h'(x) \neq h_1(x) \land h'(x) \ne h_2(x)\}|}{|H_i|} \ge 1.$$
\end{proof}
\begin{definition}
Define $E(h_1,h_2)$ to be the expected fraction of hypotheses that are eliminated from $H_i$ when the learner's hypothesis is $h_1 \in H_i$ and the target concept is $h_2 \in H_i$. 
$$E(h_1,h_2) = \sum_{x \in D(h_1,h_2)}{V(h_2, x) \cdot \mathbb{P}[x \mid h_1(x) \neq h_2(x) ]}.$$
%ME: Using new defn of P(h_1, h_2,x) and P(h_2, h_1, x)
\end{definition}
\begin{lemma}
\label{lem4}
In any $i$-th round of the algorithm ($i\ge1$), for any two hypotheses $h_1,\ h_2 \in H_i$ where $h_1 \neq h_2$, $E(h_1,h_2) + E(h_2,h_1) \ge 1$.
\end{lemma}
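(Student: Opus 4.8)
The plan is to exploit the symmetry of the two quantities $E(h_1,h_2)$ and $E(h_2,h_1)$ so that they can be added term-by-term and then to invoke Lemma~\ref{lemNew} pointwise. First I would observe that the summation index set is the same in both expressions: by Definition~\ref{ddef} the difference set is symmetric, $D(h_1,h_2) = D(h_2,h_1)$, and the conditioning event ``$h_1(x)\neq h_2(x)$'' is identical to ``$h_2(x)\neq h_1(x)$,'' so the weights agree, $\mathbb{P}[x \mid h_1(x)\neq h_2(x)] = \mathbb{P}[x \mid h_2(x)\neq h_1(x)]$ for every $x$. This lets me write both $E(h_1,h_2)$ and $E(h_2,h_1)$ as sums over the common set $D(h_1,h_2)$ against the common probability weights.

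Next I would add the two sums into a single sum, collecting the $V$ terms:
\[
E(h_1,h_2) + E(h_2,h_1) = \sum_{x \in D(h_1,h_2)} \bigl( V(h_1,x) + V(h_2,x) \bigr)\, \mathbb{P}[x \mid h_1(x)\neq h_2(x)].
\]
The crucial point is that for every $x$ in the summation range we have $h_1(x)\neq h_2(x)$ by the definition of $D(h_1,h_2)$, which is exactly the hypothesis required by Lemma~\ref{lemNew}. Applying that lemma gives $V(h_1,x) + V(h_2,x) \ge 1$ for each such $x$, so each summand is at least $\mathbb{P}[x \mid h_1(x)\neq h_2(x)]$.

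Finally I would use that $\mathbb{P}(h_1,h_2)$ is a genuine probability distribution over $D(h_1,h_2)$ (Definition~\ref{pdef}), so its weights sum to one: $\sum_{x \in D(h_1,h_2)} \mathbb{P}[x \mid h_1(x)\neq h_2(x)] = 1$. Combining the pointwise bound with this normalization yields $E(h_1,h_2) + E(h_2,h_1) \ge 1$, as desired. I do not expect a serious obstacle here; the only subtlety worth stating carefully is the symmetry step, since it is what allows the two separately-defined sums to be merged over a single index set with matching weights so that Lemma~\ref{lemNew} can be applied term-by-term. The condition $h_1\neq h_2$ guarantees $D(h_1,h_2)$ is nonempty, so the distribution $\mathbb{P}(h_1,h_2)$ is well defined.
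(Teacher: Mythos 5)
Your proposal is correct and follows essentially the same route as the paper's proof: both use the symmetry $D(h_1,h_2)=D(h_2,h_1)$ to merge the two sums over a common index set with common weights, apply Lemma~\ref{lemNew} pointwise, and conclude by the normalization of $\mathbb{P}(h_1,h_2)$. No substantive differences.
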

\begin{proof}
Recall from Definition~\ref{ddef} that $D(h_1, h_2) = \{x\in X \mid h_1(x) \neq h_2(x) \}$. Thus $D(h_1, h_2) = D(h_2, h_1).$ Thus
%ME: Moving next statement later
 \begin{align}
&E(h_1,h_2) + E(h_2,h_1) \nonumber \\
&\mbox{  } = \sum_{x \in D(h_1,h_2)}V(h_2, x)\cdot \mathbb{P}[x \mid h_1(x) \neq h_2(x) ] + \sum_{x \in D(h_2,h_1)}V(h_1, x)\cdot \mathbb{P}[x \mid h_2(x) \neq h_1(x) ] \nonumber \\
&\mbox{  } = \sum_{x \in D(h_1,h_2)}(V(h_2, x)\cdot \mathbb{P}[x \mid h_1(x) \neq h_2(x) ] + V(h_1, x)\cdot \mathbb{P}[x \mid h_1(x) \neq h_2(x) ]) \label{eqnLem4}
  \end{align}
Also since $h_1(x) \ne h_2(x)$, it follows by from Lemma~\ref{lemNew} that  for any $x \in D(h_1,h_2)$. $V(h_1, x) \ge 1 - V(h_2, x)$. Substituting in Equation (\ref{eqnLem4}) we get
%ME CHanged this statement
%$$\mathbb{P}[x \mid h_1(x) \neq h_2(x) ] = \mathbb{P}[x \mid h_2(x) \neq h_1(x) ]$$ and 
%$V(h_1, x) \ge 1 - V(h_2, x)$ because $h_1(x) \ne h_2(x)$.
%ME: With new defn of P(h_1, h_2,x)  we do not have to say P(h_1, h_2,x) = P(h_2, h_1, x)
%Thus
 \begin{align*}
E(h_1,h_2) + E(h_2,h_1) &\ge \sum_{x \in D(h_1,h_2)}(V(h_2, x) + 1 - V(h_2, x))\cdot \mathbb{P}[x \mid h_1(x) \neq h_2(x) ] \\
&= \sum_{x \in D(h_1,h_2)}\mathbb{P}[x \mid h_1(x) \neq h_2(x) ] = 1.
  \end{align*}
The last equality follows from $D(h_1, h_2) = \{x\in X \mid h_1(x) \neq h_2(x) \}$.
\end{proof}
%ME: Major changes
\begin{lemma}
\label{lem5}
%ME: Commented
In any $i$-th round of the algorithm ($i\ge1$) 
%, since the learner draws $h_l \sim \mathbb{Q}_i$,
the expected fraction of hypotheses eliminated by the counter-example given is at least $\frac{1}{2}.$ In other words, $\mathbb{E}\left[|H_{i+1}| \right] \le |H_i|/2$.
\end{lemma}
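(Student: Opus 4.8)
The plan is to exploit the Bayesian symmetry built into the algorithm. By construction (Lemma~\ref{lem3}), $\mathbb{Q}_i$ is precisely the learner's posterior distribution on the target concept given the history $R_{i-1}$, namely $q^i_j = Pr(h^* = h_j \mid R_{i-1})$, while the algorithm draws its own hypothesis $h_i$ from this same distribution independently of $h^*$. Hence, conditioned on reaching round $i$, the learner's hypothesis $h_i$ and the target $h^*$ are two independent draws from $\mathbb{Q}_i$. Writing $q_j = q^i_j$ for the probability $\mathbb{Q}_i$ assigns to $h_j \in H_i$, and recalling that $E(h_j,h_k)$ is exactly the expected fraction of $H_i$ eliminated when the learner plays $h_j$ and the target is $h_k$, I would express the expected fraction eliminated in round $i$, conditioned on a counter-example actually being returned (i.e.\ on $h_i \neq h^*$, since the diagonal terms $j=k$ produce no counter-example and leave $H_{i+1}$ undefined), as
$$\frac{\sum_{j \neq k} q_j q_k \, E(h_j, h_k)}{\sum_{j \neq k} q_j q_k}.$$

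The key step is a pairing argument on the numerator. Grouping the ordered indices into unordered pairs, each pair $\{j,k\}$ with $j \neq k$ contributes
$$q_j q_k \, E(h_j, h_k) + q_j q_k \, E(h_k, h_j) = q_j q_k \bigl( E(h_j, h_k) + E(h_k, h_j) \bigr) \ge q_j q_k,$$
where the inequality is exactly Lemma~\ref{lem4}. Summing over all unordered pairs yields $\sum_{j \neq k} q_j q_k E(h_j, h_k) \ge \frac{1}{2} \sum_{j \neq k} q_j q_k$, so the displayed conditional expectation is at least $\frac{1}{2}$. Equivalently, at least half of $H_i$ is eliminated in expectation, giving $\mathbb{E}\left[|H_{i+1}|\right] \le |H_i|/2$.

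I expect the main obstacle to be not the algebra but justifying the symmetry claim precisely, that $h_i$ and $h^*$ are genuinely independent and identically distributed according to $\mathbb{Q}_i$ given $R_{i-1}$. This is where Lemma~\ref{lem3} does the heavy lifting, since it certifies that the learner's maintained distribution $\mathbb{Q}_i$ coincides with the teacher's true posterior on $h^*$. A secondary point requiring care is the conditioning on the diagonal: the normalization $\sum_{j \neq k} q_j q_k = 1 - \sum_j q_j^2$ must appear in both numerator and denominator, and it is exactly the pairing bound that lets the residual mass $\sum_j q_j^2$ cancel. This cancellation is what produces the clean factor $\frac{1}{2}$, rather than the weaker $\frac{1}{2}(1 - \sum_j q_j^2)$ that an unconditioned count of eliminations would yield.
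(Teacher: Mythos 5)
Your proof is correct and follows essentially the same route as the paper's: both express the expected eliminated fraction as $\sum_{j,k} q_j q_k E(h_j,h_k)$ using the fact that $h_i$ and $h^*$ are independent draws from $\mathbb{Q}_i$ (via Lemma~\ref{lem3}), and both reduce to the pairing bound $E(h_j,h_k)+E(h_k,h_j)\ge 1$ from Lemma~\ref{lem4}. The only difference is in the diagonal: you condition on $h_i\neq h^*$ and cancel the normalization $\sum_{j\neq k}q_jq_k$, whereas the paper keeps the diagonal terms under the convention $E(h,h)=1$ and then separately bounds $\sum_{j<k}q_jq_k\le\frac12$; your treatment is arguably the more careful of the two, but it is not a genuinely different argument.
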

\begin{proof}
Note that  in round $i$ of the Randomized algorithm the learner draws a hypothesis $h \sim \mathbb{Q}_i$, for $i \ge 1$. Note  that $\mathbb{Q}_i$ is also  the teacher's posterior distribution  over $H_i$ in round $i$ of the Randomized algorithm (Lemma~\ref{lem3}). 
Let $n = |H_i|$, and let $q_j$ denote the probability that $h_j \sim \mathbb{Q}_{i}$ (here we dropped the superscript $i$ in $q^i_j$ for ease of exposition). Thus, for hypotheses $h_j, h_k \in H_i$, $q_j$ is  the probability of learner drawing hypothesis $h_j$ and $q_k$ is  the probability of  $h_k$ being teacher's hypothesis in round $i$ of the Randomized algorithm.
%Let the total number of hypotheses be $n.$
Therefore, the expected fraction of hypotheses eliminated by the counter-example in round $i$ of the Randomized algorithm is 
%when $h_l \sim \mathbb{Q}_i$ is
$$\sum_{j=1}^{n}\sum_{k=1}^{n}q_jq_kE(h_j,h_k) . $$
By Lemma~\ref{lem4}, and using the fact that $E(h,h) = 1$ for any $h \in H_i$ the expected fraction of hypotheses eliminated by the counter-example in round $i$ of the Randomized algorithm,
 \begin{align*}
  & \ge \sum_{j=1}^{n}{q_j}^2 + \sum_{j=1}^{n}\sum_{k>j}^{n}q_jq_k 
  =(q_1 + q_2 + q_3 + ... + q_n)^2 - \sum_{j=1}^{n}\sum_{k>j}^{n}q_jq_k \\
 & \ge (q_1 + q_2 + q_3 + ... + q_n)^2 - \frac{(q_1 + q_2 + q_3 + ... + q_n)^2}{2} 
  = (1-\frac{1}{2}) = \frac{1}{2}. %\qedhere
 \end{align*}
\end{proof}
\begin{theorem}
\label{t2}
%ME: Combined lines
In the setting where the target concept is drawn from $\mathbb{Q}$ and the counter-examples are drawn from $\mathbb{P}$, Algorithm~\ref{alg2} only needs to see an expected
%ME: Removed O
%$\mathcal{O}(\log{}|H|)$
$\log{}|H|$
counter-examples to learn $h^*$.
\end{theorem}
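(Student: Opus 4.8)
The plan is to mirror the inductive argument used for the Majority algorithm in Theorem~\ref{sec4t1}, but with the stronger per-round elimination bound of Lemma~\ref{lem5} replacing that of Lemma~\ref{sec4lem2}. I would define $T(n)$ to be the expected number of rounds the Randomized algorithm uses on a concept class of size $n$, where the expectation ranges over both the teacher's draw of $h^*$ from the current distribution \emph{and} the learner's own randomness, and then prove by strong induction the predicate $P(n): T(n) \le \log_2 n$. The base cases are immediate: no query is needed when $|H|=1$, so $T(1)=0=\log_2 1$, and at most one query is needed when $|H|=2$, so $T(2) \le 1 = \log_2 2$.

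For the inductive step, I would assume $P(r)$ holds for all $r \le n$ and consider $|H|=n+1$. Letting $R$ denote the number of consistent hypotheses surviving the first round, the self-similar structure of the algorithm gives the recursion
$$T(n+1) \le 1 + \sum_{r=1}^{n} \mathbb{P}[R=r] \cdot T(r).$$
Applying the inductive hypothesis and then Jensen's inequality (using concavity of $\log_2$) yields
$$T(n+1) \le 1 + \sum_{r=1}^{n} \mathbb{P}[R=r] \cdot \log_2 r \le 1 + \log_2 \mathbb{E}[R].$$
By Lemma~\ref{lem5} the counter-example eliminates at least half the hypotheses in expectation, so $\mathbb{E}[R] \le (n+1)/2$, and substituting gives $T(n+1) \le 1 + \log_2\frac{n+1}{2} = \log_2(n+1)$, completing the induction and hence the bound $T(|H|) \le \log_2 |H|$.

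The step I expect to require the most care is justifying the recursion $T(n+1) \le 1 + \sum_r \mathbb{P}[R=r] T(r)$ itself, since it relies on the subproblem after one round being a genuine instance of the same randomized learning problem on the smaller class. This is where Lemma~\ref{lem3} does the real work: the algorithm replaces $\mathbb{Q}$ by the exact teacher posterior $\mathbb{Q}_2$ over the surviving set $H_2$, so conditioned on $R=r$ the remaining task is again ``learn a target drawn from the current distribution via random counter-examples'' on a class of size $r$, and $T(r)$ legitimately applies. I would make explicit that the expectation defining $T$ ranges over both sources of randomness, so that Lemma~\ref{lem5}'s bound on $\mathbb{E}[R]$ plugs in directly, and note that at least the learner's own hypothesis $h_r$ is eliminated whenever a counter-example is returned, guaranteeing $R \le n$. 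Compared with Theorem~\ref{sec4t1}, the improvement from base $\tfrac43$ to base $2$ comes entirely from the $\tfrac12$ rather than $\tfrac14$ elimination guarantee.
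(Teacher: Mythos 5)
Your proof is correct and follows essentially the same route as the paper: the paper's own proof simply invokes Lemma~\ref{lem5} for the $\tfrac12$ expected elimination rate and then cites Theorem 21 of Angluin and Dohrn for the induction, which is exactly the recursion you write out explicitly (and which the paper itself spells out in base $\tfrac43$ for Theorem~\ref{sec4t1}). Your added care in justifying the self-similarity of the recursion via the posterior update of Lemma~\ref{lem3} is a worthwhile point of rigor that the paper leaves implicit, but it is not a different approach.
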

\begin{proof}
From Lemma~\ref{lem5} it follows that in  any $i$-th round of the Randomized algorithm ($i\ge1$) at least $\frac{1}{2}$  of the remaining hypotheses get eliminated in expectation. 
%ME: Commented
%In round $1$, when $H_1 = H$ and $\mathbb{Q}_1 = \mathbb{Q}$, the counter-example will eliminate at least $\frac{1}{2}$ of the hypotheses in expectation (by Lemma~\ref{lem5}). In the next round, the learner draws $h_l \sim \mathbb{Q}_2$. In $\mathbb{Q}_2$, the probability of picking any particular hypothesis is equivalent to $Pr(A|B)$, where $A$ is the event that that hypothesis is the teacher's hypothesis, and $B$ is the event that in the first round, the set of hypotheses $H_1 - H_2$ was eliminated (by Lemma~\ref{lem3}). As a result, round $2$ in this setting can be thought to be a round $1$ in a new setting with hypotheses $H_2$ and distribution $\mathbb{Q}_2$. In this setting, Lemma~\ref{lem5} holds because the learner and teacher are picking from the ``same'' distribution, and the counter-example that is chosen must eliminate at least $\frac{1}{2}$ of the remaining hypotheses in expectation. This argument can be repeated for all subsequent rounds. 
Thus, by Theorem 21 of Angluin and Dohrn~\citep{angluin2017power} it follows that the Randomized algorithm (Algorithm~\ref{alg2}) can learn the teacher's hypothesis in 
%$\mathcal{O}(\log{}|H|)$ 
$\log{}|H|$
expected rounds.
%ME: Changed quote on same from '' to ``
\end{proof}

%\begin{proof}
%The proof follows from Theorem 3 of a cited work~\citep{bhatia2018simple}.
%\end{proof}
%%%%%%%%%%%%%%%%%%%%%%%%%%%%%%%%%%%%%%%%%%%%%%%%%%%%%%%%%%%%%%%%%%%%%%%%%%%
%               Adversarial Learning Algorithm
%%%%%%%%%%%%%%%%%%%%%%%%%%%%%%%%%%%%%%%%%%%%%%%%%%%%%%%%%%%%%%%%%%%%%%%%%%%
\subsection{Upper Bound on Learning Time of Arbitrary Learning Algorithm}
\label{ssala}
In this section, an Arbitrary learning algorithm (Algorithm~\ref{alg3}) is analyzed in the PAC-LRC model where the learner is allowed to pick any consistent hypothesis in every round. Just as in exact learning, in the  PAC-LRC model, in  each round  a randomly drawn counter-example is returned to the learner. However the two differ in their terminating conditions which  determine when  learning  ends for the learner. 
%ME: Improve this later
Recall that in the PAC-LRC model with parameters $\epsilon$ and $\delta$, learning ends if the probability of elimination of all $\epsilon$-\textit{bad} hypotheses (hypothesis that differs with the target concept in a region that has total probability at most $\epsilon$) in $H$ exceeds $1-\delta$.
%the teacher queries the PAC-LRC on random examples to estimate the size (total probability) of the difference region between the learner's hypothesis and the target concept. The 
%ME: commented out
The only exception is if the learner  presents a  hypothesis that is not $\epsilon$-\textit{bad}, in which case the teacher does not return a counter-example and the learning ends right away.   
%upon the first occurrence of one of two events.  One event represents the learner presenting a hypothesis that differs with the target concept in a region that has total probability at most $\epsilon$, in other words a hypothesis that is not $\epsilon$-\textit{bad}. Another event represents all $\epsilon$-\textit{bad} hypotheses being eliminated from the concept class with probability at least $1- \delta$.  
% In the absence of these events, 

%ME: Major changes
%Recall that in the black box model with parameters $\epsilon$ and $\delta$, the teacher queries the learners hypothesis by randomly selecting $\frac{1}{\epsilon}$ examples. From these, a counter-example is randomly returned if one exists and the process continues. The learning process ends when the learner presents a hypothesis that differs with the target concept in a region that has total probability at most $\epsilon$, in other words a hypothesis that is not $\epsilon$-\textit{bad}.
In the following, it is shown that with probability at least $1 - \delta$, the Arbitrary learning algorithm terminates within $\mathcal{O}(\frac{1}{\epsilon}\log{\frac{|H|}{\delta}})$ rounds.
%second event to happen first, 
%for all $\epsilon$-\textit{bad} hypothesis to be eliminated from the concept class with probability at least $1- \delta$, 
%thus ensuring that the learning is completed within these many rounds.
%ME: Major changes

%\linespread{1}
\begin{algorithm}[h!]
\label{alg3}
 $i=1.$ \\
 \While{true}{
% 	Pick \(\hat h\) to be any arbitrary hypothesis in $H$. \\
   Pick $h_i$ to be any arbitrary hypothesis in $H$. \\
%	Let $x'$ be the counter-example returned by the teacher.\\
	Let $x_i$ be the counter-example returned by the teacher.\\
	\If{ there is no such counter-example}
%	{Output $\hat h$. \\}
	{Output $h_i$. \\}
	\Else
	{Eliminate the set of hypotheses $\{h \in H \mid h(x_i) \neq h^*(x_i)\}$ from $H$.\\}	
%	{Eliminate the set of hypotheses $\{h \in H \mid h(x') \neq h^*(x')\}$ from $H$.\\}
	$i=i+1.$\\
 }
 \caption{Arbitrary Learning Algorithm}
\end{algorithm}
%ME: Changed index to i for both hypothesis and counter-ex
%\linespread{1.5}
\begin{definition}
Let the target hypothesis be $h^*$. Let  $h_i$ denote the learner's hypotheses at round $i$. That is the learner's hypotheses is the sequence $h_1, h_2 \ldots $. Let the sequence of counter-examples received by the learner be denoted by $x_1, x_2 \ldots$.
\end{definition}
%ME: Added defn
\begin{definition}
\label{weightdef}
Let the \textit{weight} of a column $x \in X$ at the start
of any round $i\ge 1$ be denoted as $W_i(x)$, and let $W_i(S) = \sum_{x \in S}{W_i(x)}$. For hypothesis $h$,  $W_i(h)= W_i(D(h, h^*))$ denotes the total weight on all the columns on which $h$ differs from $h^*$. For all $x \in X$, define $W_1(x) = 0$, and allow the weight of a column to be incremented in each round by the probability that the column is chosen to be a counter-example. More formally for $n \ge 2$,
\[
W_{n}(x) = W_{n-1}(x) + \mathbb{P}[x \mid h_{n-1}(x) \neq h^*(x)].
\]
Or
\[
W_{n}(x) = \sum_{i=1}^{n-1} \mathbb{P}[x \mid h_i(x) \neq h^*(x)].
\]
%ME: sum goes from 1 to n not 0 to n-1
\iffalse
\[
W_{i+1}(x) = W_i(x) + 
 \begin{cases}
	\mathbb{P}[x \mid h_i(x) \neq h^*(x) ]  & \text{if $x \in D(h_i, h^*)$} , \\
 	0 & \text{otherwise} .
 \end{cases}
\]
\fi
%ME: Commented out formula for W_{i+1}(x) 
%ME: Changed defn to use \mathbb{P}[x \mid h_i(x) \neq h^*(x) ] instead of \mathbb{P}(h_i, h^*, x)
Note that in each round $i \ge 1$, $W_{i+1}(X) = W_{i}(X) + 1$. This is because 
$$\sum_{x \in X} \mathbb{P}[x \mid h_i(x) \neq h^*(x) ] = 1$$
%ME: Added explanation
\end{definition}
\begin{definition}
\label{errdef}
Let for $i \ge 1$, $E_i(h)$  be the probability that a hypothesis $h$ is eliminated in round $i$. Thus, $E_i(h)$ is the probability that on the counter-example $x_i$,  hypothesis $h$ differs from $h^*$. 
More formally, 
%$$E_i(h) = \sum_{x \in D(h,h^*)\cap D(h,h_i)}{\mathbb{P}[x \mid h_i(x) \neq h^*(x) ] }.$$
$$E_i(h) = {\mathbb{P}[h(x) \neq h^*(x) \mid h_i(x) \neq h^*(x) ] }.$$
%ME: Changed defn 
\end{definition}
Note that for any $h\in H$ and any round $n \ge 1$, by Definition~\ref{weightdef} and Definition~\ref{errdef}, 
\[\sum_{i = 1}^{n}{E_i(h)} = \sum_{i = 1}^{n}{\phantom{100} {\mathbb{P}[h(x) \neq h^*(x) \mid h_i(x) \neq h^*(x) ] }}\]
\[\phantom{100}\phantom{100}\phantom{100}\phantom{100}= \sum_{x \in D(h,h^*)}{\phantom{100}\sum_{i = 1}^{n}{\mathbb{P}[x \mid h_i(x) \neq h^*(x) ]}} = W_{n+1}(D(h, h^*)) = W_{n+1}(h).\]
\iffalse
\[\sum_{i = 0}^{n}{E_i(h)} = \sum_{i = 0}^{n}{\phantom{100}\sum_{x \in D(h,h^*)\cap D(h_i,h^*)}{\mathbb{P}[x \mid h_i(x) \neq h^*(x) ]}}\]
\[\phantom{100}\phantom{100}\phantom{100}\phantom{100}= \sum_{x \in D(h,h^*)\cap D(h_i,h^*)}{\phantom{100}\sum_{i = 0}^{n}{\mathbb{P}[x \mid h_i(x) \neq h^*(x) ]}} = W_n(D(h, h^*)).\]
\fi
%ME: Changed equations
\begin{lemma}
\label{elimlem}
Let $\theta = \ln(\frac{|H|}{\delta}).$ At the beginning of any round $n \ge 1$, consider a hypothesis $h\in H$ for which $W_n(h) >
\theta$. The probability that $h$ is still consistent, in other words $h$ has not been eliminated already, is at most $\frac{\delta}{|H|}$. 
\end{lemma}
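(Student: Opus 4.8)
The plan is to express the probability that $h$ is still consistent at the start of round $n$ as a product of its per-round survival probabilities, and then bound that product using the accumulated weight $W_n(h)$. The hypothesis $h$ survives round $i$ exactly when the teacher's counter-example $x_i$ does not lie in $D(h,h^*)$; by Definition~\ref{errdef} this occurs with probability $1 - E_i(h)$, where $E_i(h) = \mathbb{P}[h(x) \neq h^*(x) \mid h_i(x) \neq h^*(x)]$. So the first step is to argue that the event ``$h$ is eliminated in none of the rounds $1,\ldots,n-1$'' has probability
$$\prod_{i=1}^{n-1}\bigl(1 - E_i(h)\bigr),$$
obtained by the chain rule after conditioning on the realized sequence of learner hypotheses $h_1,\ldots,h_{n-1}$: given this sequence, each counter-example $x_i$ is an independent draw from $\mathbb{P}(h_i,h^*)$, so the surviving events multiply.

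Next I would apply the elementary inequality $1 - t \le e^{-t}$ to each factor, giving
$$\prod_{i=1}^{n-1}\bigl(1 - E_i(h)\bigr) \le \prod_{i=1}^{n-1} e^{-E_i(h)} = \exp\Bigl(-\sum_{i=1}^{n-1} E_i(h)\Bigr).$$
I would then invoke the identity recorded just after Definition~\ref{errdef}, namely $\sum_{i=1}^{n-1} E_i(h) = W_n(h)$, to rewrite the exponent as $-W_n(h)$. Finally, using the hypothesis $W_n(h) > \theta = \ln(|H|/\delta)$ together with the monotonicity of $\exp$, I would conclude
$$\exp\bigl(-W_n(h)\bigr) < \exp(-\theta) = e^{-\ln(|H|/\delta)} = \frac{\delta}{|H|},$$
which is exactly the claimed bound.

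The main obstacle I anticipate is justifying the product formula rigorously, since the Arbitrary algorithm may pick $h_i$ as a function of the earlier counter-examples $x_1,\ldots,x_{i-1}$, so the round-by-round survival events are not manifestly independent. I would resolve this either by conditioning on the full sequence of chosen hypotheses (under which $W_n(h)$ and each $E_i(h)$ are deterministic and the $x_i$ are conditionally independent draws from $\mathbb{P}(h_i,h^*)$), or, more robustly, by checking that $\mathbf{1}[\,h\text{ survives through round }k\,]\cdot \exp\bigl(\sum_{i=1}^{k}E_i(h)\bigr)$ is a supermartingale via the inequality $(1-t)e^{t}\le 1$ for $t\ge 0$, so that its expectation stays bounded by its initial value $1$. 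Once the product formula is in hand, the remaining manipulation is the short deterministic calculation above.
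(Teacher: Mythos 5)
Your proof is correct and follows essentially the same route as the paper: both express the survival probability as $\prod_{i=1}^{n-1}\bigl(1-E_i(h)\bigr)$ and bound it by $e^{-W_n(h)}$ using the identity $\sum_{i=1}^{n-1}E_i(h)=W_n(h)$, the only difference being that you apply $1-t\le e^{-t}$ factor by factor whereas the paper first equalizes the $E_i(h)$ via an AM--GM step before exponentiating --- a detour your version cleanly avoids. Your attention to the dependence of $h_i$ on earlier counter-examples (resolved by conditioning on the hypothesis sequence or by a supermartingale argument) addresses a subtlety that the paper's own proof passes over silently.
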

\begin{proof}
%Note that at the beginning of round $n=1$, for all hypothesis $h$, we have $W_n(h) = 0 <= \theta$. Thus,  the Lemma  holds trivially. So let $n > 1$.
In the $i$-th round of the algorithm the counter-example $x_i$ is drawn with the probability distribution $\mathbb{P}[x \mid h_i(x) \neq h^*(x) ] $. The probability that on this counter-example   hypothesis $h$ is inconsistent with $h^*$ is therefore  ${\mathbb{P}[h(x) \neq h^*(x) \mid h_i(x) \neq h^*(x) ] },$ which by Definition~\ref{errdef} equals $E_i(h)$. Thus, with probability  $1-E_i(h)$, hypothesis $h$ is not eliminated in round $i$. 
%Consider the complementary probability. 
Thus, the probability that hypothesis $h \in H$ with $W_n(h) > \theta$ has not been eliminated in the first $n-1$ rounds is:
$$\prod_{i = 1}^{n-1}{(1-E_i(h))} \le \prod_{i = 1}^{n-1}{(1-\frac{\sum_{i = 1}^{n-1}{E_i(h)})}{n-1}} = \prod_{i = 1}^{n-1}{(1-\frac{W_n(h)}{n-1})} \le e^{-W_n(h)} \le e^{-\ln(\frac{|H|}{\delta})} = \frac{\delta}{|H|}.$$
%Performing a union bound over all $h \in H$ for which $W_n(D(h, h^*)) > \theta$ and finding the complementary probability, it can be seen that the probability that none of these hypotheses are consistent, in other words they have been eliminated by round $n$, is at least $1 - \frac{\delta}{|H|} \cdot |H| = 1 - \delta$.
\end{proof}
\begin{definition}
\label{deflight}
Let a \textit{threshold} $\theta^*(x) = \ln(\frac{|H|}{\delta})\cdot\frac{2\mathbb{P}(x)}{\epsilon}$ be defined over every $x \in X$ based on the probability distribution $\mathbb{P}$. A column is considered \textit{light} at the start of any round $i$ if $W_i(x) \le \theta^*(x)$ and \textit{heavy} otherwise. Let $L_i \subset X$ denote the set of light columns and $B_i \subset X$ denote the set of heavy (or bulky) columns at the start of round $i$. 
\end{definition}
\begin{lemma}
\label{halflem}
Let the learner's hypothesis $h_i$ in round $i \ge 1$ be $\epsilon$-\textit{bad}. That is $\mathbb{P} [h_i(x) \neq h^*(x) ] \ge \epsilon $. Let $h_i$ also satisfy that at  the start of round $i$ its total weight $ W_{i}(h_i) \le \ln(\frac{|H|}{\delta})$. Then,
the weight of light columns $L_i$ should increase by at least half in  round $i$. In other words
%with probability $1-\delta$. 
%In other words, for any round $i$ in which learners hypothesis $h_i$ is  $\epsilon$-\textit{bad},
%ME: Added $\epsilon$-\textit{bad}+ other condn in Lemma
$$W_{i+1}(L_i) \ge W_i(L_i) + \frac12.$$
\end{lemma}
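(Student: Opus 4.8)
The plan is to show that the claimed growth comes entirely from the per-round increment to the light columns, and that this increment is at least $\frac12$. First I would unfold the weight recursion from Definition~\ref{weightdef}: since $W_{i+1}(x) = W_i(x) + \mathbb{P}[x \mid h_i(x) \neq h^*(x)]$ holds for every column, summing over the fixed set $L_i$ gives $W_{i+1}(L_i) = W_i(L_i) + \sum_{x \in L_i}\mathbb{P}[x \mid h_i(x) \neq h^*(x)]$. Hence the lemma reduces to proving that the conditional mass placed on light columns in round $i$ is at least $\frac12$. Because the counter-example distribution $\mathbb{P}[\cdot \mid h_i(x) \neq h^*(x)]$ is supported on $D(h_i,h^*)$ and sums to $1$, and $L_i, B_i$ partition $X$, this is equivalent to showing that the conditional mass on the heavy columns is at most $\frac12$, i.e. $\sum_{x \in B_i}\mathbb{P}[x \mid h_i(x) \neq h^*(x)] \le \frac12$.

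The heart of the argument is a bound on the true probability mass of the heavy columns on which $h_i$ actually disagrees with $h^*$, namely $\mathbb{P}(B_i \cap D(h_i,h^*))$. Writing $\theta = \ln(|H|/\delta)$, every heavy column satisfies $W_i(x) > \theta^*(x) = \frac{2\theta\,\mathbb{P}(x)}{\epsilon}$ by Definition~\ref{deflight}. I would restrict attention to heavy columns lying in $D(h_i,h^*)$ and sum this inequality, using nonnegativity of weights so that $W_i(h_i) = \sum_{x \in D(h_i,h^*)} W_i(x) \ge \sum_{x \in B_i \cap D(h_i,h^*)} W_i(x)$. Combined with the hypothesis $W_i(h_i) \le \theta$, this yields $\theta \ge \frac{2\theta}{\epsilon}\,\mathbb{P}(B_i \cap D(h_i,h^*))$, and therefore $\mathbb{P}(B_i \cap D(h_i,h^*)) \le \frac{\epsilon}{2}$.

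To finish I would translate this back into conditional mass. For $x \in D(h_i,h^*)$ the definition of the conditional distribution gives $\mathbb{P}[x \mid h_i(x) \neq h^*(x)] = \mathbb{P}(x)/\mathbb{P}[h_i(x) \neq h^*(x)]$, so summing over $x \in B_i$ (only the part inside $D(h_i,h^*)$ contributes) gives $\sum_{x \in B_i}\mathbb{P}[x \mid h_i(x)\neq h^*(x)] = \mathbb{P}(B_i \cap D(h_i,h^*))/\mathbb{P}[h_i(x) \neq h^*(x)]$. Since $h_i$ is $\epsilon$-\textit{bad}, the denominator is at least $\epsilon$, so this ratio is at most $(\epsilon/2)/\epsilon = \frac12$, which is exactly what is needed. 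I expect the main obstacle to be the middle step: correctly combining the per-column heavy threshold with the total-weight cap $W_i(h_i)\le\theta$ so that the factors of $\theta$ cancel and leave the clean bound $\mathbb{P}(B_i\cap D(h_i,h^*))\le \epsilon/2$. The two end steps are essentially bookkeeping with the weight recursion and the definition of the conditional distribution, but I would take care that the fixed index set $L_i$ (light at the \emph{start} of round $i$) is used consistently on both sides of the weight update.
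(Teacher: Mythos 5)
Your proof is correct and is essentially the paper's argument run in the forward direction rather than by contradiction: both hinge on the same chain relating the heavy-column threshold $\theta^*(x)$, the cap $W_i(h_i)\le\ln(|H|/\delta)$, and the $\epsilon$-badness of $h_i$ to conclude $\mathbb{P}(B_i\cap D(h_i,h^*))\le\epsilon/2$ and hence that at most half the conditional counter-example mass falls on heavy columns. No gap; the bookkeeping steps you flag (the weight recursion over the fixed set $L_i$ and the support of the conditional distribution) are handled the same way in the paper.
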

\begin{proof}
%\setlength{\belowdisplayskip}{4.0pt} %\setlength{\belowdisplayshortskip}{4.0pt}
%\setlength{\abovedisplayskip}{4.0pt} %\setlength{\abovedisplayshortskip}{4.0pt}
%ME: New
Assume for the sake of contradiction that $$W_{i+1}(L_i) < W_i(L_i) + \frac12.$$ $W_{i+1}(X) = W_i(X) + 1$ and $B_i + L_i = X$ together imply that
\begin{equation}
\label{eqn:1}
    W_{i+1}(B_i) > W_i(B_i) + \frac12.
\end{equation}
%$$W_{i+1}(B_i) > W_i(B_i) + \frac12.$$ 
%Consider the learner's hypothesis $h_i$ in round $i$. 
Note that  only the weights of the columns in the set $D(h_i, h^*)$ change in this round as the counter-example $x_i$ is drawn from the probability distribution $\mathbb{P}[x \mid h_i(x) \neq h^*(x) ]$. Specifically only the weight of the heavy columns in the set $D(h_i, h^*). \cap B_i$ can change  in this round. Applying Definition~\ref{weightdef} we therefore have,
\begin{equation}
\label{eqn:2}
W_{i+1}(B_i) - W_i(B_i) =  \sum_{x\in D(h_i, h^*). \cap B_i }\mathbb{P}[x \mid h_i(x) \neq h^*(x) ].
%ME: Changed defn to use \mathbb{P}[x \mid h_i(x) \neq h^*(x) ] instead of \mathbb{P}(h_i, h^*, x)
\end{equation}
 %$$W_{i+1}(B_i) - W_i(B_i) =  \sum_{x\in D(h_i, h^*) \cap B_i }\mathbb{P}(h_i, h^*, x), 
 From Equations (\ref{eqn:1}) and (\ref{eqn:2}) we get, 
 \begin{equation}
\label{eqn:3}
\sum_{x\in D(h_i, h^*) \cap B_i }{\mathbb{P}[x \mid h_i(x) \neq h^*(x) ]} =  \mathbb{P}[D(h_i, h^*) \cap B_i \mid h_i(x) \neq h^*(x) ] > \frac12.
\end{equation}
% \sum_{x\in D(h_i, h^*) \cap B_i }{\mathbb{P}(h_i, h^*, x) > \frac12}.$$
%ME: Changed defn to use \mathbb{P}[x \mid h_i(x) \neq h^*(x) ] instead of \mathbb{P}(h_i, h^*, x)
 %ME: Added stuff
The total weight on $h_i$ at the beginning of round $i$ satisfies
%\setlength{\belowdisplayskip}{5.0pt} %\setlength{\belowdisplayshortskip}{5.0pt}
%\setlength{\abovedisplayskip}{5.0pt} %\setlength{\abovedisplayshortskip}{5.0pt}
%ME: New
%ME: Replaced W_i(h_i) by W_i(D(h_i, h^*))
$$
W_i(h_i) = W_i(D(h_i, h^*)) \ge  W_i( D(h_i, h^*) \cap B_i )
$$
Since the weights of all $x \in B_i$, at the beginning of round $i$, is at least $\theta^*(x) = \ln(\frac{|H|}{\delta})\cdot\frac{2\mathbb{P}(x)}{\epsilon}$ (Definition~\ref{deflight}), the total weight on $h_i$ at the beginning of round $i$ satisfies
\begin{equation}
\label{eqn:4}
W_i(h_i)  \ge  W_i( D(h_i, h^*) \cap B_i )> \ln(\frac{|H|}{\delta})\cdot\frac{2\cdot \mathbb{P}[D(h_i, h^*) \cap B_i]}{\epsilon}
\end{equation}
%$h_i$ being $\epsilon$-\textit{bad} implies
%$\mathbb{P}(D(h_i, h^*)) \ge \epsilon$. Combining with
Since $\mathbb{P} [h_i(x) \neq h^*(x) ] \ge \epsilon $, and since $\mathbb{P}[D(h_i, h^*) \cap B_i \mid h_i(x) \neq h^*(x) ] > \frac12$ (Equation~\ref{eqn:3}) 
we have 
$$ \mathbb{P}[D(h_i, h^*) \cap B_i)] > \frac{\epsilon}{2}. $$
Combining with Equation (\ref{eqn:4}) we get, the total weight on $h_i$ at the beginning of round $i$ satisfies
$$W_i(h_i) > \ln(\frac{|H|}{\delta})\cdot\frac{2\cdot \mathbb{P}[D(h_i, h^*) \cap B_i]}{\epsilon}> \ln(\frac{|H|}{\delta})\cdot\frac{2 \cdot \epsilon}{\epsilon \cdot 2} = \ln(\frac{|H|}{\delta}).$$
This is a contradiction because we assumed $W_i(h_i) < \ln(\frac{|H|}{\delta})$.
%, according to Lemma~\ref{elimlem}, the learner's $\epsilon$-\textit{bad} hypothesis $h_i$ should have been eliminated with probability $1-\delta$ in the rounds before $i$.
\end{proof}
\iffalse
\begin{definition}
\label{otherweightdef}
Let $t(x) = n$ be the round $n$ that a column $x \in X$ transitions from a light column to a heavy column. Let $W'_m(x) = W_m(x)$ at rounds $m$ where $m < t(x),$ and $\theta^*(x)$ otherwise. Let $T_n$ be the set of all columns that become heavy in round $n$.
\end{definition}
\fi
\begin{lemma}
\label{allheavy}
If for all $i \ge 1$ the learner's hypothesis $h_i$ satisfies  $ W_{i}(h_i) \le \ln(\frac{|H|}{\delta})$
then within  $\mathcal{O}(\frac{1}{\epsilon}\log{\frac{|H|}{\delta}})$ rounds  all $x \in X$ become heavy or the algorithm terminates because  the learner presents a hypothesis that is not $\epsilon$-\textit{bad}. 
\end{lemma}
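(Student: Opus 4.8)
The plan is to argue by a weight-counting argument. Suppose toward contradiction that the algorithm runs for $N$ rounds without terminating and without all columns having become heavy, i.e.\ $L_i \neq \emptyset$ for every round $i \le N$. Since the algorithm has not terminated, in each such round the learner's hypothesis $h_i$ is $\epsilon$-\textit{bad}, and by the hypothesis of this lemma $W_i(h_i) \le \ln(\frac{|H|}{\delta})$; hence Lemma~\ref{halflem} applies in every one of these rounds and gives $W_{i+1}(L_i) \ge W_i(L_i) + \frac12$. Summing over $i = 1,\dots,N$ yields
$$\sum_{i=1}^{N}\bigl(W_{i+1}(L_i) - W_i(L_i)\bigr) \ge \frac{N}{2}.$$
The goal is then to show that the left-hand side is $O(\frac{1}{\epsilon}\log\frac{|H|}{\delta})$, which forces $N$ to be small.

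To bound the left-hand side from above, I would first note that weights are non-decreasing and that once $W_i(x) > \theta^*(x)$ the column $x$ stays heavy forever, so $L_1 \supseteq L_2 \supseteq \cdots$. For each column $x$ let $t(x)$ be the first round in which $x$ is heavy (with $t(x)=\infty$ if it never is). Exchanging the order of summation and telescoping each column's increments over exactly the rounds $i < t(x)$ in which it is light gives, using $W_1(x)=0$,
$$\sum_{i=1}^{N}\bigl(W_{i+1}(L_i) - W_i(L_i)\bigr) = \sum_{x \in X} W_{\min(t(x),\,N+1)}(x).$$
Each term is at most $\theta^*(x)$ plus the single weight increment applied in the round (if any) that carried $x$ across its threshold, since just before that round $x$ was still light and so had weight at most $\theta^*(x)$.

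The crucial step is bounding that last increment. Because $h_i$ is $\epsilon$-\textit{bad} we have $\mathbb{P}(D(h_i,h^*)) \ge \epsilon$, so the weight added to $x$ in that round is $\mathbb{P}[x \mid h_i(x)\neq h^*(x)] = \mathbb{P}(x)/\mathbb{P}(D(h_i,h^*)) \le \mathbb{P}(x)/\epsilon$. Hence each term is at most $\theta^*(x) + \mathbb{P}(x)/\epsilon$, and since each column crosses its threshold at most once,
$$\sum_{x \in X} W_{\min(t(x),\,N+1)}(x) \le \sum_{x\in X}\theta^*(x) + \frac{1}{\epsilon}\sum_{x\in X}\mathbb{P}(x) = \frac{2}{\epsilon}\ln\!\Bigl(\frac{|H|}{\delta}\Bigr) + \frac{1}{\epsilon},$$
where I used $\sum_x \theta^*(x) = \frac{2}{\epsilon}\ln(\frac{|H|}{\delta})$ and $\sum_x \mathbb{P}(x)=1$. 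Combining with the lower bound $N/2$ gives $N \le \frac{4}{\epsilon}\ln(\frac{|H|}{\delta}) + \frac{2}{\epsilon} = O(\frac{1}{\epsilon}\log\frac{|H|}{\delta})$, so such a run cannot exceed that many rounds; within that many rounds the algorithm either terminates or all columns become heavy.

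I expect the main obstacle to be precisely the overshoot bound in the third paragraph. The naive estimate that a single-round increment is at most $1$ would make the telescoped sum $O(\frac{1}{\epsilon}\log\frac{|H|}{\delta} + |X|)$, carrying an unwanted dependence on $|X|$. The fix — using $\epsilon$-\textit{bad}-ness to cap each increment at $\mathbb{P}(x)/\epsilon$ so that the total overshoot telescopes to $\sum_x \mathbb{P}(x)/\epsilon = 1/\epsilon$ — is exactly what removes the $|X|$ term and yields the clean $O(\frac{1}{\epsilon}\log\frac{|H|}{\delta})$ bound.
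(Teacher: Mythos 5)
Your proposal is correct and follows essentially the same route as the paper's proof: both invoke Lemma~\ref{halflem} to get a $\frac12$ per-round increase in the weight of light columns, and both bound the total accumulated light-column weight by $\sum_x \theta^*(x) = \frac{2}{\epsilon}\ln(\frac{|H|}{\delta})$ plus an overshoot of at most $\mathbb{P}(x)/\epsilon$ per column (totalling $1/\epsilon$), yielding the same $N \le \frac{4\ln(|H|/\delta)+2}{\epsilon}$ bound. Your explicit telescoping identity merely formalizes the paper's prose accounting of the same decomposition.
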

\begin{proof}
%We first provide the intuition for the proof. 
%By definition~\ref{deflight}, at the beginning of any round $i$, the weight $W_{i}(x)$ of a light column $x$ is bounded by $W_{i}(x) \le \theta^*(x) = \ln(\frac{|H|}{\delta})\cdot\frac{2\mathbb{P}(x)}{\epsilon}$.  In other words, at the beginning of  round $i$, the total weight of all the light columns $W_i(L_i)$ is bounded by 
%$$W_i(L_i) = \sum_{x\in L_i} \theta^*(x) \le \frac{2}{\epsilon} \cdot \ln(\frac{|H|}{\delta}).$$ 

Note that it is enough to bound   the  number of rounds it takes for all $x \in X$ to become heavy by assuming that in every round the  learner's hypothesis is $\epsilon$-\textit{bad}.  
%From Lemma~\ref{elimlem} it  follows,  with probability at least $ 1 - \delta$, that each of the learner's hypothesis $h_1,h_2 \cdots $ satisfies $ W_{i}(h_i) < \ln(\frac{|H|}{\delta})$ . 
Thus,  in each round $i$ we can assume that the learner's hypothesis is  $\epsilon$-\textit{bad} and its weight is at most $\ln(\frac{|H|}{\delta})$ and hence  Lemma~\ref{halflem} applies. 
Hence in each round $i \ge 1$, the total weight of light columns  increase by at least one half. We will assume this in the proof below.

%Consider an extreme situation where every light column $x$ that starts out light in round $i$ stays  light when it enters round $i+1$ after this increase. Such kind of increases cannot happen for more than   $\frac{4}{\epsilon} \cdot \ln(\frac{|H|}{\delta})$ rounds as in each round the total weight of the light columns increases by at least half and this total weight of light columns cannot exceed $\frac{2}{\epsilon} \cdot \ln(\frac{|H|}{\delta}).$   

Note that all columns start out being light initially (since $W_1(x) = 0$, for all $x \in X$). Consider the situation  where a  column $x$ that starts out light in round $i$  turns into a heavy column during round $i$. For $x$ this change can happen only one time since once  $x$ becomes heavy it cannot become light again. The increase in weight that tips $x$  from light to a heavy column in round $i$ is upper bounded by $\frac{\mathbb{P}(x)}{\epsilon}$. This is because  $\mathbb{P} [h_i(x) \neq h^*(x) ] \ge \epsilon $, and therefore $\mathbb{P}[x \mid h_i(x) \neq h^*(x) ] \le \frac{\mathbb{P}(x)}{\epsilon}.$ Thus, for each light column $x$ at most $\frac{\mathbb{P}(x)}{\epsilon}$ of its weight increase can contribute to its transformation from a light to a to heavy column. For all light columns this total contribution of their weight increase towards their transformation is therefore upper bounded by 
$$\sum_{x \in X} \frac{\mathbb{P}(x)}{\epsilon} = \frac{1}{\epsilon}.$$ 

Consider a round $i \ge 1$ at  the end of which   there are still some light columns left. From Lemma~\ref{halflem}  it follows that in each of the rounds $1, 2 \ldots i$  the total weight of the light columns would have increased by at least half. Thus, in these $i$ rounds at least $\frac{i}{2}$ of the   weight increase got  applied to columns in the rounds in which they were light. Out of this, as shown above, at most $\frac{1}{\epsilon}$ weight increase contributed to the transformation of light columns to heavy columns. Hence at least $\frac{i}{2} - \frac{1}{\epsilon}$ of this weight increase contributed to the increase in weight of the columns during rounds in which  they stayed light. 
By definition~\ref{deflight}, when a column $x$ is light its weight is at most  $ \theta^*(x) = \ln(\frac{|H|}{\delta})\cdot\frac{2\mathbb{P}(x)}{\epsilon}$. Thus, out of  $\frac{i}{2} - \frac{1}{\epsilon}$ only $\ln(\frac{|H|}{\delta})\cdot\frac{2\mathbb{P}(x)}{\epsilon}$ weight could have been applied towards column $x$. Hence, in total out of $\frac{i}{2} - \frac{1}{\epsilon}$ only
$$\sum_{x \in X} \ln(\frac{|H|}{\delta})\cdot\frac{2\mathbb{P}(x)}{\epsilon}  = \ln(\frac{|H|}{\delta})\cdot\frac{2}{\epsilon}$$
weight increase could have been applied towards all the columns $x \in X$. Thus, we must have 
$$\frac{i}{2} - \frac{1}{\epsilon} \le  \ln(\frac{|H|}{\delta})\cdot\frac{2}{\epsilon}.$$
Or, 
$$  i  \le \frac{4\cdot\log{\frac{|H|}{\delta}}+2}{\epsilon}.$$ 
Thus,  at the end of $N=i+1$ rounds there cannot be any more light columns left. Here
$$  N =  \frac{4\cdot\log{\frac{|H|}{\delta}}+2}{\epsilon} + 1 = \mathcal{O}(\frac{1}{\epsilon}\log{\frac{|H|}{\delta}}).$$ 
This establishes the Lemma.
\end{proof}
\begin{theorem}
With probability at least $1 - \delta$ the Arbitrary learning algorithm terminates in $\mathcal{O}(\frac{1}{\epsilon}\log{\frac{|H|}{\delta}})$ rounds.
\end{theorem}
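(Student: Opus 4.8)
The plan is to combine the deterministic weight-growth bound of Lemma~\ref{allheavy} with the probabilistic survival bound of Lemma~\ref{elimlem}, glued together by the following key observation: once every column is heavy, every $\epsilon$-\textit{bad} hypothesis must have accumulated total weight exceeding $\theta = \ln(\frac{|H|}{\delta})$. Indeed, if $h$ is $\epsilon$-\textit{bad} then $\mathbb{P}(D(h,h^*)) \ge \epsilon$, and if all columns are heavy then $W_n(x) > \theta^*(x) = \ln(\frac{|H|}{\delta})\cdot\frac{2\mathbb{P}(x)}{\epsilon}$ for every $x$; summing over $x \in D(h,h^*)$ gives $W_n(h) > \frac{2}{\epsilon}\ln(\frac{|H|}{\delta})\,\mathbb{P}(D(h,h^*)) \ge 2\ln(\frac{|H|}{\delta}) > \theta$. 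So heaviness of all columns forces every surviving $\epsilon$-\textit{bad} hypothesis into the regime where Lemma~\ref{elimlem} applies.

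First I would introduce a single good event $G$ that captures the benign behavior of the random counter-examples: for every hypothesis $h \in H$, $h$ has already been eliminated by the first round at which its weight exceeds $\theta$. Applying Lemma~\ref{elimlem} to each $h$, at the first round where $W_n(h) > \theta$, bounds by $\frac{\delta}{|H|}$ the probability that $h$ is still consistent at that round; a union bound over the at most $|H|$ hypotheses shows that $G$ fails with probability at most $\delta$, so $G$ holds with probability at least $1-\delta$. Because weights are monotonically nondecreasing and eliminations are permanent, on $G$ no consistent hypothesis ever carries weight exceeding $\theta$.

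Next I would observe that on $G$ the precondition of Lemma~\ref{allheavy} is satisfied for free: the learner only ever presents consistent hypotheses, so each presented $h_i$ must satisfy $W_i(h_i) \le \theta$. Lemma~\ref{allheavy} then guarantees that within $N = \mathcal{O}(\frac{1}{\epsilon}\log\frac{|H|}{\delta})$ rounds either the learner presents a hypothesis that is not $\epsilon$-\textit{bad}, in which case the teacher returns no counter-example and learning ends at once, or all columns become heavy. In the latter case the opening observation shows every remaining $\epsilon$-\textit{bad} hypothesis would have weight exceeding $\theta$; but on $G$ any such hypothesis has already been eliminated, so no $\epsilon$-\textit{bad} hypothesis remains consistent and the PAC-LRC termination condition is met. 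Either way the algorithm halts within $N$ rounds, and since all of this reasoning is carried out on the event $G$ of probability at least $1-\delta$, the theorem follows.

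The step I expect to be the main obstacle is making the union bound defining $G$ fully rigorous, because the round at which a given hypothesis first crosses the threshold $\theta$ is itself a random, adaptively determined quantity: the weights $W_n(\cdot)$ depend on the learner's past hypotheses and on the realized counter-examples. Care is needed to invoke Lemma~\ref{elimlem} validly despite this adaptivity, for instance by noting that its survival bound holds conditionally on the history that determines that round, and to confirm that the two distinct failure modes, namely a heavy hypothesis surviving versus the failure of the weight-growth precondition of Lemma~\ref{allheavy}, are both absorbed into the single bad event (the complement of $G$) rather than each demanding its own separately charged $\delta$.
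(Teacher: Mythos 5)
Your proposal is correct and follows essentially the same route as the paper: a union bound over Lemma~\ref{elimlem} to obtain a probability-$(1-\delta)$ event on which every presented (consistent) hypothesis has weight at most $\ln(\frac{|H|}{\delta})$, then Lemma~\ref{allheavy} to force all columns heavy within $\mathcal{O}(\frac{1}{\epsilon}\log{\frac{|H|}{\delta}})$ rounds, and finally the observation that heaviness pushes any $\epsilon$-bad hypothesis's weight past the threshold. The only cosmetic difference is that the paper closes with a contradiction on the weight of the learner's own round-$N$ hypothesis, whereas you argue that every surviving $\epsilon$-bad hypothesis must already have been eliminated; your explicit flagging of the adaptivity issue in applying Lemma~\ref{elimlem} at a data-dependent round is a genuine subtlety that the paper's proof leaves implicit.
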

\begin{proof}
We show that with probability at least $1 - \delta$ the Arbitrary algorithm cannot run for more than $N$ rounds where
$$  N =  \frac{4\cdot\log{\frac{|H|}{\delta}}+2}{\epsilon} + 1 $$.
Let us assume that is not the case and the algorithm runs for $m=N+1$ rounds with probability $1 - \delta$.
Note that all the learner's hypotheses $h_1, h_2 \ldots h_{m-1}$ have to be $\epsilon$-\textit{bad}, otherwise the algorithm would have ended earlier.  From Lemma~\ref{elimlem} it  follows that the probability of $W_{i}(h_i) > \ln(\frac{|H|}{\delta})$ is  at most $ \frac{\delta}{|H|}$, for any learner's hypothesis $h_i$. This is because each hypothesis $h_i$ is consistent in the beginning of the $i$-th round in which it got selected by the algorithm. Thus, by applying the union bound we get that with probability at least $ 1 - \frac{\delta}{|H|}  \cdot m \ge 1 - \delta$, each of the learner's hypothesis $h_i$ satisfies $ W_{i}(h_i) \le  \ln(\frac{|H|}{\delta})$. Here we used  the fact that $m \le |H|$, as any PAC-LRC learning algorithm must always terminate within $|H|$ rounds. We now continue the proof under the assumption that the learner's hypothesis $h_1,h_2 \cdots h_{m}$ satisfy $ W_{i}(h_i) \le  \ln(\frac{|H|}{\delta})$.   

Consider the first 
$$m-1 = N = \frac{4\cdot\log{\frac{|H|}{\delta}}+2}{\epsilon} + 1 $$
rounds. In these rounds  the learner's hypotheses $h_i$ are all $\epsilon$-\textit{bad} and their weights satisfy $W_{i}(h_i) \le \ln(\frac{|H|}{\delta})$.
Thus, Lemma~\ref{allheavy} applies and it follows that  all the columns become heavy at the end of $m-1=N$ rounds. 

Consider the $N$-th round. By Definition~\ref{weightdef}, $W_N(h_N) = W_N(D(h_N, h^*))$. Since every column in round $N$ is heavy, by Definition~\ref{deflight},  $$W_N(D(h_N, h^*)) > \ln(\frac{|H|}{\delta})\cdot\frac{2\cdot \mathbb{P}(D(h_N, h^*) )}{\epsilon}.$$  
Also since $h_N = h_{m-1}$ is $\epsilon$-\textit{bad}, we have $\mathbb{P}(D(h_N, h^*) ) \ge \epsilon$. Thus, 
$$W_N(h_N) > \ln(\frac{|H|}{\delta})\cdot\frac{2\cdot \mathbb{P}(D(h_N, h^*) )}{\epsilon}\ge \ln(\frac{|H|}{\delta})\cdot\frac{2 \cdot \epsilon}{\epsilon} > \ln(\frac{|H|}{\delta}).$$
This is a contradiction because  $W_N(h_N) = W_{m-1}(h_{m-1}) \le \ln(\frac{|H|}{\delta})$. Thus, under the assumption that all learner's hypothesis $h_1,h_2 \cdots h_{m}$ satisfy $ W_{i}(h_i) \le  \ln(\frac{|H|}{\delta})$, the Arbitrary algorithm must terminate in $N = \mathcal{O}(\frac{1}{\epsilon}\log{\frac{|H|}{\delta}})$ rounds. Since this assumption holds with  probability at least $1 - \delta$ the result follows.

\end{proof}
%%%%%%%%%%%%%%%%%%%%%%%%%%%%%%%%%%%%%%%%%%%%%%%%%%%%%%%%%%%%%%%%%%%%%%%%%%%
%              Simulation
%%%%%%%%%%%%%%%%%%%%%%%%%%%%%%%%%%%%%%%%%%%%%%%%%%%%%%%%%%%%%%%%%%%%%%%%%%%
\section{Simulation Results}
In this section we present the results of testing the performance of our interactive LRC algorithms on a 
%problem of online learning user preferences for movie recommendation.
%ME: commented
synthetic data-set.
%of movie recommendations.
We first generate a  data-set of  movie classifications by users (who either like or dislike each movie). We then use our algorithms to interactively learn which movies a new user would like to watch, based on this data-set. In particular, we seek to match the new user with a user in the data-set who classifies all movies in exactly the same way. We assume that such a user exists in the data-set. 
%Given a data-set of past movie classifications by users (who either like or dislike each movie), the goal is to identify which movies a new user would like to watch. It is assumed that there exists a user in the data-set who is identical to the new user, in other words, who classifies all movies in exactly the same way.
%ME: changed
%user to the new user in the It is assumed that for the new user, there is an identical user in the data-set who classifies all movies in exactly the same way. Thus, the problem is to find which user in the data-set is identical to the new user. 

In this setting, the new user is the teacher, and the learner/algorithm's goal is to figure out which movies the teacher would like to watch. Here, the set of samples $X$ corresponds to the set of movies. Additionally, each user in the data-set is represented by a unique hypothesis in the learner's concept class $H$. These hypotheses $h \in H$ represent how the user classifies all the movies. For instance, $h(x)=1$ if and only if the corresponding user likes to watch movie $x \in X$. Note that the teacher's  target concept $h^*$ is also a hypothesis in $H$. $h^*$ is how the new user classifies movies. An interaction between the learner and the teacher involves the learner recommending movies to the teacher based on their current hypothesis and the teacher returning as feedback a randomly selected movie on which the learner's movie recommendation is wrong. The learner's goal is to learn the target concept without making too many mistakes.  

We generated different data-sets consisting of $n=5,000$ movies by varying the number of hypotheses (users) $|H|$  from $10,000$ to $100,000$. Within each data-set, we selected the hypotheses in $H$ to form a small number of clusters based on their similarity. This is to model that in real life, many users have similar tastes (for example, based on movie genre), and can be categorized into a small number of groups based on their preferences. In our data-set, we varied the number of clusters  from $100$ to $1,000$. To form these clusters of hypotheses, we first selected a representative hypothesis for each cluster. These representative hypotheses $h$ themselves were formed by randomly selecting the values  $h(x)$ for all $x \in X$. We selected other hypotheses in the cluster by randomly flipping up to $p$ values of the clusters representative hypothesis. Here, $p=20$.  

We tested the performance of the Majority learning algorithm using the generated data-set. In addition, we tested an Arbitrary learning algorithm that arbitrarily ordered the hypotheses and then always selected the $k$-th consistent hypothesis in $H$. We conducted three trials where the algorithm always picked the first, middle, and then last hypotheses. We ran each trial 10 times and we took an average of the results.  

Figure~\ref{figure1} shows the performance of  the Majority learning algorithm on this problem in comparison with both the theoretical bound as well as with the performance of the Arbitrary learning algorithm. We did not include the constant factor in the theoretical bound, since it was greater than one, and would have only made the bound worse. Also, for the theoretical bound, all logs were calculated to the base $2$. For the Arbitrary learning algorithm in this test, we allowed up to $1\%$ error ($\epsilon = 0.01$) with $90\%$ confidence ($\delta = 0.1$).  

In figure~\ref{figure1}, the X-axis shows the number of hypotheses in $H$ and the Y-axis shows the expected number of interactions with the teacher or the number of mistakes made by the learner in learning the target concept. As shown in figure~\ref{figure1}, the Majority learning algorithm never makes mistake on more than $10$ out of $5,000$ movies on the average. In addition, its performance is almost $2$ times better than its theoretical bound and it significantly outperforms the arbitrary algorithm.

\begin{figure}
    \centering
    \includegraphics[width=5.0in]{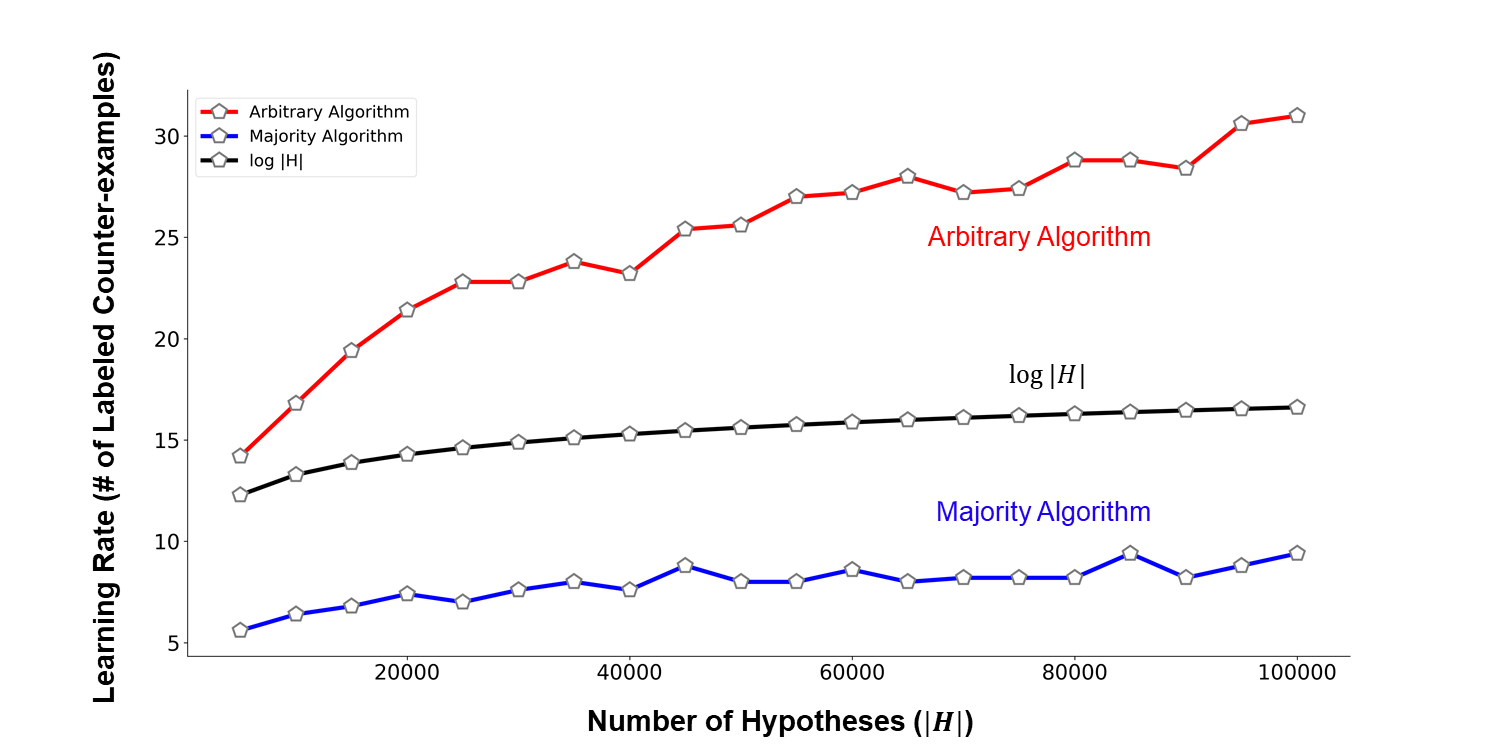}
    \caption{Performance of Majority Learning Algorithm}
    \label{figure1}
\end{figure}

In Figure~\ref{figure2}, we compared the performance of an arbitrary interactive learning algorithm to the theoretical bound of a non-interactive PAC learner. For this evaluation, as shown on the X-axis, we varied the error $\epsilon$ from $0.01$ to $0.6$.  The value of $\delta$ was held constant at $0.1$. We tried two different concept classes $H$ for  $|H| = 10,000$ and $|H| = 100,000$. For the calculation of the theoretical bound of a non-interactive PAC learner, all logs were calculated to the base $2$.

For the Arbitrary learning algorithm, the Y-axis shows the expected number of interactions with the teacher or the number of mistakes made by the learner in learning the target concept. For the non-interactive PAC learner, the Y-axis shows the number of examples required to guarantee the particular error tolerance and confidence level. 

As expected, at high error tolerance, both the interactive and non-interactive algorithms require less counter-examples (or examples). At low error tolerance, they require more counter-examples (or examples). However, it can be seen that the interactive Arbitrary algorithm significantly outperforms the non-interactive PAC learner particularly when the error tolerance ($\epsilon$) is small. This contrasts the fact that the theoretical bounds are the same for both kind of learners. 

 \begin{figure}
    \centering
    \includegraphics[width=5.0in]{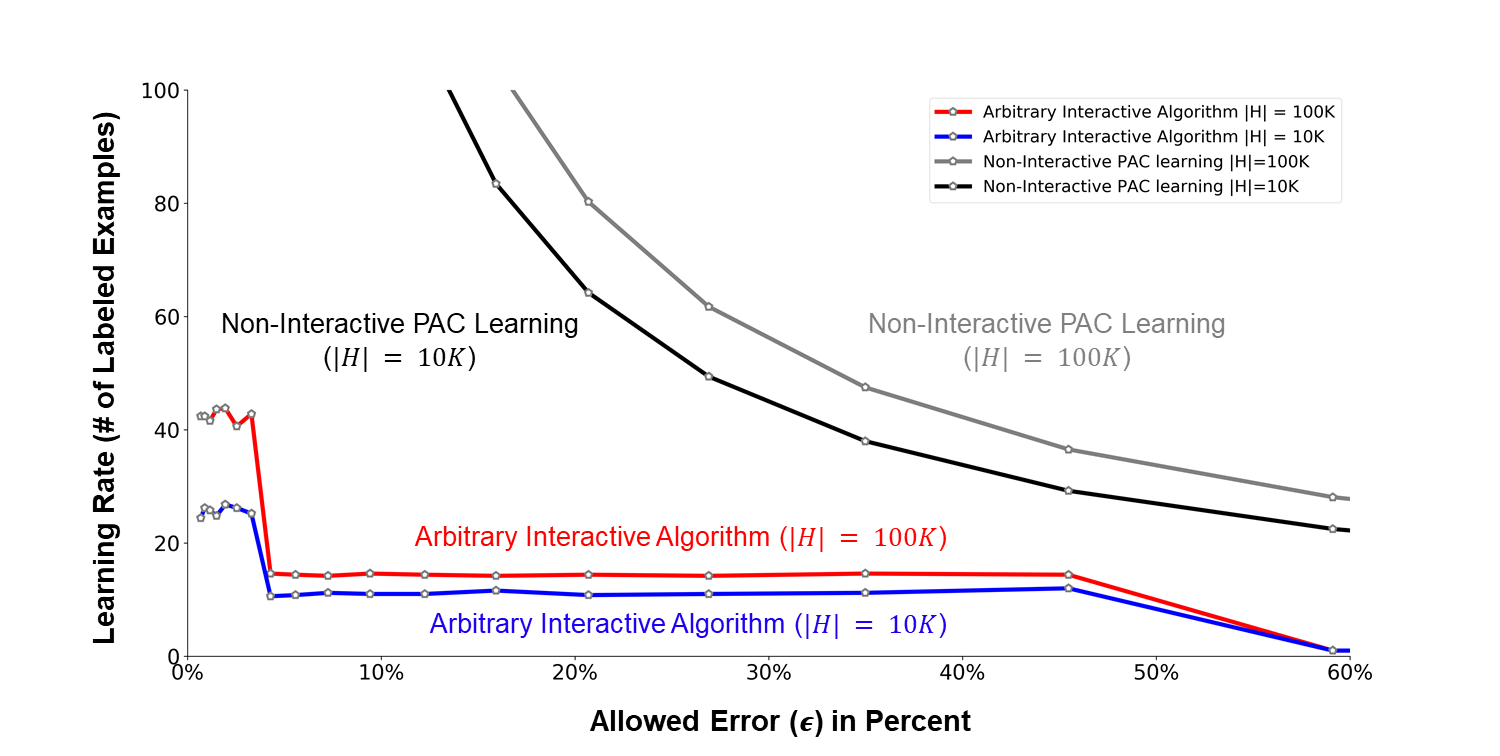}
    \caption{Performance of Arbitrary Learning Algorithm}
    \label{figure2}
\end{figure}
\label{simulation}

%%%%%%%%%%%%%%%%%%%%%%%%%%%%%%%%%%%%%%%%%%%%%%%%%%%%%%%%%%%%%%%%%%%%%%%%%%%
%               DISCUSSION AND CONCLUSION
%%%%%%%%%%%%%%%%%%%%%%%%%%%%%%%%%%%%%%%%%%%%%%%%%%%%%%%%%%%%%%%%%%%%%%%%%%%
\section{Discussion}
\label{discussion}
These results demonstrate the potential benefits and limitations of using random counter-examples as a form of feedback in interactive learning. When learners are intelligent, and use the Majority learning algorithm (Algorithm~\ref{alg1}), they can learn by seeing an expected $\mathcal{O}(\log{}|H|)$ counter-examples. As was previously shown~\citep{angluin2017power}, 
%This was somewhat of a surprising result because 
when counter-examples are not chosen randomly, it is difficult for the learner to learn some concept classes. This happens, for instance, when $H$ is simply the $n \times n$ identity matrix. In this case, for any target concept $h^*$ and consistent hypothesis $h \ne h^*$, the teacher will have a choice between exactly two counter-examples. One counter-example will eliminate all hypothesis but $h^*$, and the other \textit{bad} counter-example will just eliminate $h$. An adversarial teacher could simply pick the \textit{bad} counter-example every single round and thus it would take $\Omega(|H|)$ time to learn $h^*$ without random counter-examples. While this problem of needing random counter-examples was previously solved by a well known halving algorithm based on majority vote~\citep{littlestone1988learning, bf-pgrf-72, angluin1988queries}, that algorithm required that the learner be allowed to make improper queries. On the other hand, as our work shows that teacher who gives random counter-examples solves this problem without having to make any such sacrifices.

%This work also shows a randomized algorithm whose learning rate is upper bounded by $\mathcal{O}(\log{}|H|)$, which is the best possible in the LRC model,
%% expected counter-examples is also shown, 
%in the case where the teacher draws the target concept from the concept class at the beginning of the learning process according to some known probability distribution. It was not before known how such an algorithm would work or how its performance would be analyzed and had been left as an open problem. 
%%The bound for this algorithm and the majority algorithm are also optimal because the worst case performance of any randomized algorithm has been lower bounded by $\mathcal{O}(\log{}|H|)$ counter-examples~\citep{angluin2017power}. 

The $\mathcal{O}(\frac{1}{\epsilon}\log{\frac{|H|}{\delta}})$ upper bound on the adversarial learner shows that the interactive LRC learner performs no worse than, and achieves the same bound as, the non-interactive PAC learner~\citep{kearns1994introduction}. This is somewhat surprising because intuitively, providing specific feedback in the form of counter-examples would seem more valuable to the learner than providing randomly sampled examples as is done in the PAC model~\citep{kearns1994introduction}. It seems that the reason that the bound was not improved was that the learner's ability to be adversarial, or impede the learning process was much more pronounced in the LRC setting than the PAC setting. The reason for this was that in the LRC setting, the learner could choose specific hypothesis on which the teacher's random counter-example would generally make little progress in eliminating $\epsilon$-\textit{bad} hypotheses.

%%%%%%%%%%%%%%%%%%%%%%%%%%%%%%%%%%%%%%%%%%%%%%%%%%%%%%%%%%%%%%%%%%%%%%%%%%%
%               FUTURE WORK
%%%%%%%%%%%%%%%%%%%%%%%%%%%%%%%%%%%%%%%%%%%%%%%%%%%%%%%%%%%%%%%%%%%%%%%%%%%

\section{Conclusion and Future Work}
\label{conclusion}
In this work we provided simple and efficient algorithms for interactively learning non-binary concepts in the recently proposed setting of exact learning from random counter-examples (LRC). One such algorithm is based on majority vote and the other works by randomly selecting hypotheses from a probability distribution over target concepts that is evolved over time. Both these algorithms are shown to have the fastest possible $\mathcal{O}(\log{}|H|)$ expected learning time and  entail significantly lower computation time  than previously known algorithms. We also
%ME: Added a bit about run-time
provided an analysis that shows that interactive LRC learning, regardless of the learning algorithm, is at least as efficient as non-interactive Probably Approximately Correct (PAC) learning. \\
Our future goal is to improve the efficiency of these algorithms on other cost measures.
Throughout this paper, our focus has been  on minimizing the learning complexity of the  algorithms, which is  the number of counter-examples needed by the learner in order to correctly identify the target concept. However, 
%when actually implementing these algorithms and, for example, 
calculating the majority hypothesis or updating the teacher's distribution $\mathbb{Q}$ to match the new $H$,
%and finding the set of consistent hypotheses 
can require iterating over every remaining consistent hypothesis in $H$, in every round. This can take time $O(|H||X|)$, thus making it computationally  expensive, especially since the number of hypothesis in $H$ may grow exponentially in the number of examples (the set of columns $X$ of $H$). To address the high computational overhead of these tasks, we plan to explore alternative approaches for carrying out these tasks such as by using sampling to trade accuracy for efficiency. 
%ME: major changes above

%This is especially true about computing the majority hypothesis in the LRC setting, which requires $\mathcal{O}(nm)$ time where $n$ and $m$ represent the size of matrix / concept class $H$. 
%However, it may be possible that with an efficient
%algorithm for sampling consistent hypotheses,
%%by sampling consistent hypotheses in each round, 
%the learning algorithms can still perform optimally. In every round, such an algorithm would construct a set of hypotheses $S \subset H$ by sampling a small number of $c$ consistent hypotheses from $H$ according to a uniform distribution. It would then pick \(\hat h\) to be a hypothesis in $S$, instead of $H$, that maximizes \(\mathbb{P}(h(x) = \operatorname{MAJ}_S(x))\). Even with a small sample size $c$, it can be shown that with high probability $\operatorname{MAJ}_S(x) \approx \operatorname{MAJ}_H(x)$. It is not yet clear, however, that the \(\hat h\) sampled from $S$ will still eliminate a constant fraction of hypotheses from $H$, which is needed to prove the optimal bound for the majority learning algorithm. It is possible that in a more specific case, such as in learning rankings, it will be easier to prove such a result.

% Acknowledgements should go at the end, before appendices and references

\acks{I would like to thank my mentor Professor Daniel Hsu who guided my research in the right direction by providing me with background material in the field of computational machine learning, validating the correctness of my proofs, and assisting me in using mathematical notation when writing this paper. Professor Hsu also helped me formulate theoretical models such as the PAC-LRC interactive learning model which served as the framework for my analysis of the upper bound of an arbitrary LRC algorithm.}

% Manual newpage inserted to improve layout of sample file - not
% needed in general before appendices/bibliography.

%\newpage 
\bibliography{sample}

\end{document}